\algrenewcommand\algorithmicindent{1em}
\newenvironment{protocol}[1][htb]
{\floatname{algorithm}{Online Protocol}%
	\begin{algorithm}[#1]}
	{\end{algorithm}%
	\floatname{algorithm}{Algorithm}}
\newcommand{\cC}{\mathcal{C}}
\newcommand{\cD}{\mathcal{D}}
\newcommand{\cE}{\mathcal{E}}
\newcommand{\cF}{\mathcal{F}}
\newcommand{\cI}{\mathcal{I}}
\newcommand{\cS}{\mathcal{S}}
\newcommand{\E}{\mathbb{E}}
\newcommand{\I}{\mathbb{I}}
\newcommand{\N}{\mathbb{N}}
\newcommand{\bbR}{\mathbb{R}}
\newcommand{\e}{\varepsilon}
\newcommand{\lrb}[1]{\left(#1\right)}
\newcommand{\brb}[1]{\bigl(#1\bigr)}
\newcommand{\lsb}[1]{\left[#1\right]}
\newcommand{\bsb}[1]{\bigl[#1\bigr]}
\newcommand{\Bsb}[1]{\Bigl[#1\Bigr]}
\newcommand{\lcb}[1]{\left\{#1\right\}}
\newcommand{\bcb}[1]{\bigl\{#1\bigr\}}
\newcommand{\labs}[1]{\left\lvert#1\right\rvert}
\newcommand{\lno}[1]{\left\lVert#1\right\rVert}
\newcommand{\dif}{\,\mathrm{d}}
\newcommand{\fracc}[2]{#1/#2}
\newcommand{\s}{\subseteq}
\newcommand{\m}{\setminus}
\newcommand{\nhphantom}[1]{\sbox0{#1}\hspace{-\the\wd0}}
\newcommand{\bisect}{\mathrm{Bisect}}
\newcommand{\length}{\mathrm{length}}
\newcommand{\Var}{\mathrm{Var}}
\newcommand{\ceq}{\coloneqq}
\DeclareSymbolFont{extraup}{U}{zavm}{m}{n}
\DeclareMathSymbol{\clubsuit}{\mathalpha}{extraup}{84}
\DeclareMathSymbol{\spadesuit}{\mathalpha}{extraup}{81}
\DeclareMathSymbol{\varheartsuit}{\mathalpha}{extraup}{86}
\DeclareMathSymbol{\vardiamondsuit}{\mathalpha}{extraup}{87}
\newcommand{\gft}{g}
\newcommand{\GFT}{\mathrm{GFT}}
\newcommand{\bx}{\boldsymbol{x}}
\newcommand{\biave}{BiAve}
\newcommand{\exbis}{ExBis}
\newcommand{\sF}{\mathscr{F}}
\newcommand{\Vtilde}{\tilde V}
\newcommand{\Wtilde}{\tilde W}
\newcommand{\J}{\mathcal{J}}
\newtheorem{theorem}{Theorem}
\newtheorem{lemma}{Lemma}
\title{A Tight Regret Analysis of \\
	Non-Parametric Repeated Contextual Brokerage}
\author{{\bf Fran\c{c}ois Bachoc} \\ IMT, University of Toulouse, Institut universitaire de France (IUF) \\ 
	~ \\ {\bf Tommaso Cesari} \\ EECS, University of Ottawa \\ ~ \\{\bf Roberto Colomboni} \\ 
Dept. of CS, University of Milan}
\begin{document}

\maketitle

\begin{abstract}
We study a contextual version of the repeated brokerage problem. 
In each interaction, two traders with private valuations for an item seek to buy or sell based on the learner's---a broker---proposed price, which is informed by some contextual information.
The broker's goal is to maximize the traders' net utility---also known as the gain from trade---by minimizing regret compared to an oracle with perfect knowledge of traders' valuation distributions.
We assume that traders' valuations are zero-mean perturbations of the unknown item's current market value---which can change arbitrarily from one interaction to the next---and that similar contexts will correspond to similar market prices.
We analyze two feedback settings: full-feedback, where after each interaction the traders' valuations are revealed to the broker, and limited-feedback, where only transaction attempts are revealed.
For both feedback types, we propose algorithms achieving tight regret bounds.
We further strengthen our performance guarantees by providing a tight $1/2$-approximation result showing that the oracle that knows the traders' valuation distributions achieves at least $1/2$ of the gain from trade of the omniscient oracle that knows in advance the actual realized traders' valuations.
\end{abstract}

\section{INTRODUCTION}
We investigate repeated brokerage with contextual information, where a broker (the learner) is tasked with facilitating commerce between prospective traders.
This classic setting models commerce in Over-The-Counter (OTC) markets of stock, energy, and rare minerals, to name a few, which are responsible for a massive amount of the overall world's business volume \citep{lucas1989effects,weill2020search,bis2023}.

In this problem, during each interaction $t$, two traders who own identical copies of an item for which they hold private valuations $V_t$ and $W_t$ reach out to the broker.
The traders' goal is to make a profit by trying to sell a copy of their item if the proposed price is higher than their valuation or buy a new copy if the opposite is true.
The broker observes some contextual information $\bx_t$ modeling the item in question and the market conditions and uses it along with his past knowledge to propose a trading price $P_t$ to the traders.
If the price is below one of the two traders' valuations and above the other, the trader with the highest valuation buys the item from the other trader at price $P_t$.
The broker strives to maximize the so-called \emph{gain from trade}, i.e., the sum of the net utilities gained by the traders.
Consistently with the existing literature \citep{bachoc2024contextual}, we assume that traders' valuations are independent zero-mean perturbations of market values $\mu_t$, which can change arbitrarily over time.
But unlike previous works that assume a parametric (linear) relationship between contexts and market values, we only suppose that similar contexts will correspond to similar market prices.
The goal of the broker is to minimize the \emph{regret}, defined as the loss in efficiency between the total gain from trade achieved by their strategy and the one of an idealized oracle that chooses the optimal price at each interaction given an exact knowledge of \emph{the traders' valuation distribution}.
We also consider an even more powerful oracle that has perfect knowledge of the \emph{realizations} of the traders' valuation, and discuss how the techniques we develop apply to this setting.

We study two variants of this problem: the \emph{full}-feedback setting, where the valuations of the traders are revealed to the broker after each interaction, and the \emph{limited}-feedback setting, where nothing other than the fact that the traders attempted to buy or sell is revealed to the learner after each interaction.

\subsection{Formal setting}
\label{s:setting}

We study the following online learning problem. 
%
\begin{protocol}
	\caption{Contextual Brokerage}
	\begin{algorithmic}[1]
		\State Two traders arrive with private valuations $V_{t},W_{t}$ 
		\item The broker observes a context $\bx_{t}$
		\item The broker proposes a trading price $P_t$
		\item A trade occurs iff $\min\{V_t,W_t\} \le P_t \le\max\{V_t,W_t\}$
		\item The broker observes some feedback
	\end{algorithmic}
\end{protocol}

Consistently with the existing literature, we set the reward associated with each interaction as the \emph{gain from trade}: the sum of the net utilities of the traders. 
Formally, for any $p,v,w \in [0,1]$, letting $v \vee w \ceq \max\{v,w\}$ and $v \wedge w \ceq \min\{v,w\}$, the utility of a price $p$ when the valuations of the traders are $v$ and $w$ is
\begin{align*}
	\gft(p,v,w) 
	&
	\coloneqq
	(\underbrace{v \vee w - p}_{\substack{\textrm{buyer's}\\\textrm{net gain}}} + \underbrace{p - v \wedge w}_{\substack{\textrm{seller's}\\\textrm{net gain}}} ) \I \{ \underbrace{v \wedge w \le p \le v \vee w }_{\textrm{a trade occurs}}\}
	\\
	&
	= 
	\lrb{v \vee w - v \wedge w} \I \{v \wedge w \le p \le v \vee w\}.
\end{align*}
The aim of the learner is to post prices $P_t$ (depending the history up to time $t-1$, the current context $\bx_t$ and, possibly, some internal randomization) that minimize the \emph{regret} against the best sequence of deterministic prices\footnote{Economically, this benchmark models the best choice of an oracle that knows the distributions but not the realizations of the valuations. 
	We will prove later that this is not too different from comparing against the best random prices $p_1, \dots, p_T \in [0,1]$ (that do have access to the realizations of the valuations in hindsight).
}, irrespectively of the underlying instance determining contexts and traders' valuations.
Formally: for any time horizon $T\in\N$, we define the \emph{regret} as
\begin{align*}
	R_T
	\coloneqq 
	\sup_{(\bx_t, V_t,W_t)_{t\in\N} \, \in \, \J} \Biggl( 
	\sup_{p_1, \dots, p_T \, \in \, [0,1]} \E \lsb{ \sum_{t=1}^T \gft (p_t, V_t, W_t ) } 
	- \E \lsb{ \sum_{t=1}^T \gft \brb{ P_t, V_t, W_t } } \Biggr),
\end{align*}
where the expectations are taken with respect to the randomness in $(V_t,W_t)_{t\in \N}$ and, possibly, the internal randomization used to choose the trading prices $(P_t)_{t\in \N}$, and the first supremum is over the instance set $\J$ which consists of all sequences $(\bx_t, V_t,W_t)_{t\in\N}$ of contexts and traders' valuations such that:
\begin{enumerate}
	\item \label{i:one} For all $t\in\N$, the context $\bx_t$ belongs to $[0,1)^d$.
	\item \label{i:two} There exists a sequence of \emph{market values} $\mu_1,\mu_2,\dots$ in $[0,1]$ such that, for all $t,t'\in\N$, market values $\mu_t,\mu_{t'}$, and contexts $\bx_t,\bx_{t'}$, it holds that $ \labs{\mu_t - \mu_{t'}} \le L \lno{ \bx_t - \bx_{t'} }_{\infty}$. To lighten the notation, we assume $L=1$ without loss of generality.
	\item \label{i:three} The traders' valuations $V_1,W_1, V_2,W_2, \dots$ form an independent sequence of random variables and, for all $t\in\N$, the traders' valuations $V_t$ and $W_t$ are $[0,1]$-valued random variables admitting densities upper bounded by some constant $M>0$ with a common expectation equal to the current market value $\mu_t\in[0,1]$.
\end{enumerate}
Finally, we consider the two most studied types of feedback in the bilateral trade literature. Specifically, at each round $t$, only after having posted the price $P_t$, the learner receives either:
\begin{itemize}
	\item[$\circ$] \emph{Full feedback}, i.e., the valuations $V_{t}$ and $W_{t}$ of the two current traders are disclosed.
	\item[$\circ$] \emph{Limited feedback}, i.e., only the indicator functions $\I\{P_t \le V_{t}\}$ and  $\I\{P_t \le W_{t}\}$ are disclosed.
\end{itemize}

\paragraph{Discussion on modeling assumptions.}
In \Cref{i:one}, we assume that the context space is $[0,1)^d$ merely for the sake of convenience, and without loss of generality. Our theory can be extended straightforwardly to any bounded context space at the cost of a more cumbersome notation.
In \Cref{i:two}, we assume that similar contexts relate to similar market prices. 
This natural modeling assumption quantifies the intuitive expectation that, e.g., if the broker knows that today's traders are trying to trade 99\%-pure gold, their valuations, on average, will be close to yesterday's valuations for 98\%-pure gold, and likely far from last weeks's valuation for 70\%-pure iron.
In \Cref{i:three}, we allow for fluctuations of the perceived market price from the perspective of the traders. 
Note that we do not require the sequence of valuations to be i.i.d.. 
Also note that the assumption that valuations admit a bounded density cannot be lifted, as it has been shown that even in a simplified, special case of our setting \cite[Theorem 9]{bolic2023online}, learning becomes impossible with limited feedback when this assumption is removed.

\paragraph{Discussion on feedback models.}
The information gathered in the full feedback model reflects \emph{direct revelation mechanisms}, where traders disclose their valuations $V_{t}$ and $W_{t}$ prior to each round, but the price determined by the mechanism at time $t$ is guaranteed to be based solely on the previous valuations $V_1, W_1, \dots, V_{t-1}, W_{t-1}$. 
Conversely, the limited feedback model reflects \emph{posted price} mechanisms. 
In this model, traders only indicate their willingness to buy or sell at the posted price, and their valuations $V_{t}$ and $W_{t}$ remain undisclosed.

\subsection{Our contributions}

Under the assumptions described in \Cref{s:setting}, and with the goal of designing \emph{simple} and \emph{interpretable} optimal algorithms, we make the following contributions.

\begin{enumerate}
	\item For the full-feedback setting, we design the \biave{} algorithm (\Cref{a:lipshitz:full:feedback}) and show an upper bound on its regret after $T$ interactions of order $T^{\frac{d}{d+2}}$, where $d$ is the dimension of the context space (\Cref{t:upper-bound-full-main}).
	\item We prove the optimality of our result in the full-feedback setting, showing that no other algorithms can achieve a regret of smaller order than \biave{} (\Cref{t:lower-bound-full-main}).
	\item For the limited-feedback setting, we design the \exbis{} algorithm (\Cref{a:lipshitz:twobit:feedback}) and show an upper bound on its regret after $T$ interactions of order $T^{\frac{d+2}{d+4}}$ (\Cref{t:upper-bound-limited-main}).
	\item We prove the optimality of our result in the limited-feedback setting, showing that no other algorithms can achieve a regret of smaller order than \exbis{} (\Cref{t:lower-bound-limited-main}).
	\item Finally, we discuss an even stronger benchmark, known as \emph{first-best}, where the oracle is omniscient and chooses the optimal price at each interaction with exact knowledge of what \emph{the realizations of traders' valuations} will be, therefore, never missing a trading opportunity.
	We prove that the classic benchmark oracle that chooses prices with exact knowledge of the traders' distributions earns a gain from trade that is at least $1/2$ of that of this omniscient oracle (\Cref{t:approximation}). 
	This yields, in particular, that the performance of our learning algorithms is at most $1/2$ away from that of an omniscient oracle that knows everything about all traders.
	Furthermore, we prove that this $1/2$-approximation factor is unimprovable (\Cref{t:approximation-tight}).
	To the best of our knowledge, this is the first work on online learning in the brokerage setting where this kind of approximation result is achieved.
\end{enumerate}

\subsection{Related work}
The literature on bilateral trade is extremely rich and has experienced a steady growth since the fundamental work of \cite{myerson1983efficient}.
Classically, bilateral trade has been explored in the one-shot setting, mainly from a game-theoretic and approximation perspective \citep{Colini-Baldeschi16,Colini-Baldeschi17,BlumrosenM16,brustle2017approximating,colini2020approximately,babaioff2020bulow,dutting2021efficient,DengMSW21,kang2022fixed,archbold2023non}.
For a fairly complete overview on this literature, see, e.g., \cite{cesa2023bilateral}.
On the other hand, a recent stream of literature explored bilateral trade in a repeated setting through the lens of online learning.
Being the most relevant for our work, we focus on this literature.

In \cite{cesa2021regret,azar2022alpha,cesa2023bilateral,cesa2023repeated,bernasconi2023no,cesa2024regret}, the authors examined the non-contextual repeated bilateral trade problem with predefined seller and buyer roles: at each interaction, a new seller/buyer pair arrives, the broker proposes a trading price, and the current item is traded if and only if the proposed price is higher than the private valuation of the seller and lower than the private valuation of the buyer.
When this happens, the buyer pays the trading price to the seller, the seller gives the item to the buyer, and the broker is rewarded with the gain from trade, i.e., the sum of the seller's and buyer's utility.
In \cite{cesa2021regret, cesa2023bilateral}, the authors investigated and obtained sharp regret bounds when sellers' and buyers' valuations, represented by two random sequences of numbers $(S_t)_{t \in \N},(B_t)_{t \in \N}$, form two i.i.d.\ sequences, while showing that the adversarial case is unlearnable in general.
\cite{azar2022alpha} managed to obtain learnability in the adversarial case by relaxing the notion of regret to the one of $2$-regret.
When the platform can post two different prices to sellers and buyers, but still not being allowed to subside trades, \cite{cesa2023repeated,cesa2024regret} achieved learnability using the usual notion of regret when the adversary belongs to the class of \emph{smoothed} adversaries.
\cite{bernasconi2023no} managed to achieve learnability in the adversarial case by allowing the platform to subsidize trade, as long as the subsidizing comes from revenue obtained from previous sellers and buyers interactions.
On a different direction, \cite{bachoc2024fair} investigated how to achieve \emph{fairness} in the repeated bilateral trade problem between seller's and buyer's earnings by rewarding the broker with the minimum between the seller's utility and the buyer's utility, instead of the gain from trade.

\cite{bolic2023online} introduced the non-contextual repeated bilateral trade setting where the traders have unspecified seller's and buyer's role (brokerage) and obtained sharp learning rates in the i.i.d.\ setting when the reward function is the gain from trade.
\cite{cesari2024trading} focused on the same non-contextual setting, but with the different objective of maximizing the total number of trades.
By investigating a contextual bilateral trade with unspecified seller and buyer roles with the gain from trade as reward function, \cite{bachoc2024contextual} is the closest to our setting.
However, they assume a linear (and hence, parametric) relationship between contexts and marker values, while we relax this assumption by imposing only that close enough contexts give rise to close enough market values.

Our work also shares some similarities with contextual bandits \citep{slivkins2011contextual}. 
Our largest differentiation compared to contextual bandits is provided by our results in the limited feedback setting (\Cref{s:limited-feedback}, which we recommend reaching to better understand the following discussion). 
There, the concepts of exploration and exploitation are different from the contextual bandit setting. With bandits, exploitation consists in choosing an arm which is currently expected to be the best. 
Nevertheless, the feedback of exploitation rounds is still a realization from an arm and can be used to learn the arm mean. 
In contrast, in our setting, exploitation consists in playing our estimate of the mean $\mu_t$ for context $\boldsymbol{x}_t$, and this has no value at all for learning the mean. 
Hence, our feedback for exploitation rounds are not used later on in \Cref{a:lipshitz:twobit:feedback} (ExBis), since the mean estimates on Lines 15 and 16 only use feedback from exploration rounds. 
In contrast, in \citep{slivkins2011contextual}, Algorithm 1, Line 12, each feedback can be used to update the various estimators for the algorithm. Regarding exploration, our exploration rounds consist in playing uniformly distributed prices (which is a feature specific to our limited feedback in the brokerage problem, and absent from bandit algorithms). 
Only the feedback from these rounds can be used, but we bound their instantaneous regret by the maximal possible value 1. In contrast, with bandits algorithms, exploration would consist in playing an arm which is not expected to be optimal, but can still be close to optimal with a regret bound much better than 1. 
On exploration/exploitation, another difference is that in our setting each round is tagged as one or the other (Lines 14 and 17 in Algorithm 3), while in \citep{slivkins2011contextual} and more broadly with bandits, each round is a mix of both, for instance with an upper confidence bound rule (see \citealt[Section 4.2]{slivkins2011contextual}). 
Finally, the multiple differences above have an impact on the regret rates that are achieved. 
In \citep{slivkins2011contextual}, the worst rate is $T^{(1+d_c)/(2+d_c)}$ (see (7), there) for the dimension $d_c$ of the space where the Lipschitz mean function is defined. 
In our setting, the rate is $T^{(d+2)/(d+4)}$.

\subsection{Techniques and challenges}
In the full-feedback case, \Cref{l:structural} in \Cref{s:approximation-appe} can be used to reduce our problem to a full-feedback online adversarial contextual regression problem where, at each time step $t$, the learner is presented with a new context $\bx_t$, asked to make a prediction $y_t$, suffers a corresponding loss $\ell_t(y_t)$, then observes $\ell_t$. 
Existing techniques \citep{hazan2007online,cesa2017algorithmic} study this problem with the goal of competing against the best Lipschitz policy that maps contexts to predictions.
However, a black-box application of these techniques requires noiseless feedback, i.e., the learner needs to reconstruct exactly the loss function at the end of each round (which in our setting would be the \emph{expected} gain from trade function).
In contrast, in our setting, after having access to the traders' valuations $V_t$ and $W_t$, we observe only noisy realizations of the market price $\mu_t$, which in turn translates into the fact that we observe only \emph{noisy} realizations of the associated  loss functions in the aforementioned reduction.
To circumvent this problem, we take a different route and devise \emph{ad hoc} techniques to estimate the value of our reward function at specific points.
Specifically, we partition the context space in dyadic cells, and use the feedback we receive from previous rounds to predict the value of the reward function for contexts that belong to the same cell.
Importantly, when sufficiently many points land in the same region, the dyadic partition is adaptively refined to increase the precision of the estimates by relying only on the information retrieved from the closest contexts.
A suitable choice of the criterion to further split the dyadic cells gives the optimal rate.
For the lower bound, it is important to note that we do not have direct control on the reward functions, but we need to devise suitable instances of traders' distributions in order to produce hard instances for our problem.
Once this is done, a lattice of sufficiently-spaced contexts is built and the contexts on this lattice are repeatedly presented to the learner.
Finally, we determine a suitable horizon-dependent tuning of the number of points in this lattice and of the number of times that each of these contexts should be repeatedly presented so that the learner cannot infer any non-trivial information about the market value associated to the next context in the lattice.

In the limited feedback case there is a further layer of complexity: 
the feedback we receive
depends on the posted price
and is not even enough to directly reconstruct \emph{bandit} feedback, i.e., the realized gain associated with the action we performed.
For this reason, we cannot directly rely on existing techniques to solve bandit online adversarial contextual regression problems, but we need to devise novel techniques tailored to our problem.
Specifically, we first show that a Monte Carlo sampling procedure can be performed to reconstruct an estimate of the market value associated to a certain context.
On the other hand, it is important to note that this exploration procedure is costly (it requires posting prices with low gain from trade).
By relying on an adaptive dyadic partition of the context space, we show how to properly balance exploration rounds (where we use this Monte Carlo procedure to estimate the reward function) and exploitation rounds (where we use this information to increase our total reward) to obtain optimal regret bounds.
The lower bound construction relies on the same lattice construction we used in the full-feedback case, but with the further layer of complexity of devising instances where, given the limited feedback, potentially optimal actions do not reveal \emph{any} meaningful information about their actual optimality, forcing the learner to explore in costly regions in order to obtain that piece of information, which loosely resembles the exploration/exploitation dilemma of the so-called revealing action problem \citep{cesa2006prediction}. 
This further layer of complexity requires a different tuning (with respect to the full feedback case) of the points in the lattice and the number of rounds in a row that contexts are presented.

Finally, the $1/2$-approximation result of the \emph{first-best} cannot be deduced from the corresponding existing literature on bilateral trade: the closest result to ours in this literature provides approximations of the \emph{first-best} when the traders have definite seller and buyer roles, and, crucially, it is required that they share the same distribution \citep{kang2019fixed}.
In contrast, in our case, each trader is allowed to sell and buy, and while they share the same expected value, they do not share the same distribution.
For this reason, devise an entirely new proof to deduce our approximation result of the \emph{first-best}.

\section{FULL FEEDBACK}
\label{s:full}
In this section, we analyze how efficiently the broker can learn by leveraging full feedback.

\subsection{\biave{} and regret upper bound}
\label{s:biave-upper-bound}

In this section, we introduce and analyze our \biave{} algorithm for the full-feedback setting.
We begin with some notation.
For a subset $\cS \subseteq \mathbb{R}$, we let $\cS^- = \inf \cS$, $\cS^+ = \sup \cS$, and $\length(\cS) = \cS^+ - \cS^-$.
For a subset $\cS\s\bbR^d$ of the form $\cS \ceq \cI_1 \times \cdots \times \cI_d$, where $\cI_1, \ldots , \cI_d$  are left-closed and right-open intervals of same length, we define $\length(\cS)$ as this common length, and we define $\bisect(\cS)$ as the set containing the $2^d$ hypercubes specified below
\[
\bisect(\cS) \ceq
\bcb{ \cI_{1,a_1} \times \dots \times \cI_{d,a_d} \mid
	a_1 , \ldots , a_d \in \{-,+\} },
\]
where, for $j\in [d]$, $\cI_{j,-} \ceq \bigl[\cI_j^- , (\cI_j^- + \cI_j^+)/2 \bigr)$ and $\cI_{j,+} \ceq \bigl[ (\cI_j^- + \cI_j^+)/2 , \cI_j^+ \bigr)$.
We consider dyadic hypercubes---which for brevity, we call \emph{cells} in what follows--- of the form $\prod_{j=1}^d \bigl[k_j 2^{-i},(k_{j}+1) 2^{-i} \bigr)$, for $k_1,\dots,k_d \in \{0,\dots,2^{i}-1\}$ and $i \in\{ 0,1,2,\dots \}$, and we say that this $i$ is the \emph{level} of the cell.
Given a family of cells $\sF$, we say that $\cC \in \sF$ is \emph{terminal} if no other cell in the family is properly contained in $\cC$.
If $\cC$ is a cell of level $i \ge 1$, its \emph{parent} is the only cell of level $i-1$ that contains it, which we denote by $\cC'$.
By convention, we say that the cell $[0,1)^d$ is the parent of itself, so if $\cC = [0,1)^d$ then $\cC' = \cC$.
The pseudoocde of our \biave{} algorithm is present in \Cref{a:lipshitz:full:feedback}.
\begin{algorithm}
	\caption{\biave{} (Bisect and average)}
	\textbf{initialization:} 
	Set $\sF \coloneqq \bcb{ [0,1)^d }$
	\begin{algorithmic}[1]
		\State The environment reveals a context $\bx_1 \in [0,1)^d$
		\State Post price $P_1:= \frac{1}{2}$
		\State Observe $V_1$ and $W_1$
		\For{$t=2,3,\dots$}
		\State The environment reveals a context $\bx_t \in [0,1)^d$ \label{line:envir:xt}
		\State Let $\cC_t$ be the terminal cell in $\mathscr{F}$ such that $\bx_t \in \cC_t$\label{state:set:Ct}
		\State Let $i_t$ be the level of $\cC_t$\label{state:it}
		\State Let $n_t$ be the number of $s\in[t-1]$ such that $\bx_s \in \cC_t$\label{state:nt}
		\If{$\cC_t = [0,1)^d$}
		\State Let $n'_t \ceq n_t$ 
		\Else
		\State Let $q_t$ be the time at which the parent cell $\cC'_t$ 
		
		of $\cC_t$ was bisected
		\State Let $n'_t$ be the number of $s\in[q_t]$ s.t.\ $\bx_s \in \cC'_t$
		\EndIf
		\If{ $n_t \ge n'_t$}\label{state:if:more:in:cell}
		\State 
		\label{state:if:more:in:cell:Pt}Post price $P_t \ceq \frac{1}{2n_t} \sum_{s=1}^{t-1} \I\{\bx_s \in \cC_t \} (V_s+W_s) $ \label{state:play-Pt-1}
		\ElsIf{$n_t < n'_t$}\label{state:if:more:in:parent}
		\State
		\label{state:if:more:in:parent:Pt}
		Post price $P_t \ceq \frac{1}{2n'_t} \sum_{s=1}^{q_t} \I\{\bx_s \in \cC'_t \} (V_s+W_s) $\label{state:play-Pt-2}
		\EndIf
		\If{ $\sqrt{n_t} \ge 2^{i_t}$  }\label{state:bisection:condition}
		\State Add the family of cells $\bisect( \cC_t )$ to $\sF$ \label{state:biAve-bisect}
		\EndIf
		\State Observe $V_t$ and $W_t$
		\EndFor \label{s:repeat-end}
	\end{algorithmic} \label{a:lipshitz:full:feedback}
\end{algorithm}
At a high level, the algorithm discretizes the context space $[0,1)^d$ into cells of \emph{locally-adaptive granularity}.
The price $P_t$ proposed at any time step $t$ is computed as one of two possible empirical averages, depending on whether or not the number of past contexts that fell into the terminal cell $\cC_t$ containing the current context $\bx_t$ is larger than the number of contexts that fell into the parent cell $\cC'_t$ of $\cC_t$ before $\cC'_t$ was bisected (\Cref{state:if:more:in:cell,state:if:more:in:parent}).
If sufficiently many contexts fell into $\cC_t$, then $P_t$ is chosen as the empirical average of all valuations observed in past rounds $s$ where contexts $\bx_s$ fell into $\cC_t$ (\Cref{state:play-Pt-1}).
Otherwise, $P_t$ is chosen as the empirical average of the valuations observed in past rounds $s$ where contexts $\bx_s$ fell into the parent cell $\cC'_t$, up to the time where $\cC'_t$ was bisected (\Cref{state:play-Pt-2}).
Finally, as soon as ``too many'' contexts have fallen into the same terminal cell (\Cref{state:bisection:condition}), the algorithm bisects it to increase the granularity of the estimation in that context region (\Cref{state:biAve-bisect}).
We now provide theoretical guarantees for the performance of \biave{}.

\begin{theorem}
	\label{t:upper-bound-full-main}
	In the full-feedback setting, if we run the \biave{} algorithm for $T$ time steps, its regret satisfies
	\[
	R_T
	=
	O \lrb{T^{\frac{d}{d+2}}}.
	\]
\end{theorem}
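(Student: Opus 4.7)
The plan is to reduce the brokerage regret to an expected squared-error bound on estimating $\mu_t$, and then carry out a bias--variance analysis tailored to \biave{}'s adaptive dyadic partition.

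\emph{Step 1 (reduction to quadratic loss).} For $\phi_t(p) \ceq \E\bsb{\gft(p, V_t, W_t)}$ and $p \in [0,1]$, a direct computation using the independence and densities of $V_t$ and $W_t$ (\Cref{i:three}) gives $\phi_t'(p) = (\mu_t - p)\brb{f_{V_t}(p) + f_{W_t}(p)}$. Hence $\phi_t$ is maximized at $p = \mu_t$ (so the supremum over deterministic sequences in the definition of $R_T$ is attained at $p_t = \mu_t$), and the bounded-density assumption yields the quadratic comparison $\phi_t(\mu_t) - \phi_t(p) \le M\,(p - \mu_t)^2$. Applying the structural lemma (\Cref{l:structural}) at $p = P_t$ and taking expectations reduces the regret to
\[
R_T \;\le\; M \sum_{t=1}^T \E\bsb{(P_t - \mu_t)^2}.
\]

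\emph{Step 2 (bias--variance per round).} Whichever branch fires at \Cref{state:if:more:in:cell} vs.\ \Cref{state:if:more:in:parent}, the posted $P_t$ is the empirical average of $(V_s + W_s)/2$ over past rounds $s$ whose contexts fell in a cell $\cK_t \in \{\cC_t, \cC_t'\}$ of level $k_t \in \{i_t, i_t - 1\}$, using $n \in \{n_t, n'_t\}$ summands. Conditioning on the past contexts and bisection events, the Lipschitz hypothesis on $\mu$ (\Cref{i:two}) gives $\babs{\mu_s - \mu_t} \le \length(\cK_t) = 2^{-k_t}$ for every averaged $s$, so the conditional bias is at most $2^{-k_t}$; the independence and $[0,1]$-boundedness of the valuations (\Cref{i:three}) give conditional variance at most $1/(2n)$. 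Combining and taking full expectations gives $\E\bsb{(P_t - \mu_t)^2} \le 4^{-k_t} + 1/(2n)$. In the ``current cell'' branch $n = n_t \ge n'_t = 4^{i_t-1}$ (the bisection threshold applied to the parent), and in the ``parent cell'' branch $n = n'_t = 4^{i_t-1}$; in both cases $\E\bsb{(P_t - \mu_t)^2} = O(4^{-i_t})$.

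\emph{Step 3 (aggregation and cell count).} Regrouping by terminal cell and using Step~2,
\[
\sum_{t=1}^T \E\bsb{(P_t - \mu_t)^2} \;=\; O\lrb{\sum_{\cC \in \sF_T} m_\cC \cdot 4^{-\level(\cC)}},
\]
where $\sF_T$ denotes the set of cells ever used by \biave{} and $m_\cC$ counts rounds with $\cC_t = \cC$. The bisection rule forces $m_\cC = O(4^{\level(\cC)})$, so each cell contributes $O(1)$, giving the bound $O(|\sF_T|)$. To bound $|\sF_T|$, observe that a cell of level $i - 1$ is bisected only after at least $4^{i-1}$ contexts fall in it, and cells at a fixed level are pairwise disjoint; combined with the trivial dyadic count of $2^{id}$, there are at most $\min(2^{id},\; 2^{d+2} T / 4^i)$ level-$i$ cells in $\sF_T$. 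Summing and balancing the two branches of this minimum at the critical level $2^{i^\star} \asymp T^{1/(d+2)}$ yields $|\sF_T| = O(T^{d/(d+2)})$, which completes the proof.

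\emph{Main obstacle.} The delicate step will be making the bias--variance decomposition of Step~2 fully rigorous despite the random, data-dependent partition: one must carefully condition on the $\sigma$-algebra generated by all past contexts and bisection events before invoking independence of the valuations, and then uniformly treat both algorithmic branches together with boundary cases (the root cell $[0,1)^d$ and freshly spawned sub-cells with $n_t = 0$). These boundary contributions are easily seen to be $O(1)$ per cell and therefore do not affect the $T^{d/(d+2)}$ rate.
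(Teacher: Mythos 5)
Your proof is correct and follows essentially the same route as the paper: reduce to squared error via the structural lemma, bound the squared error as bias ($O(4^{-i_t})$ from the Lipschitz property over a cell of level $i_t$ or $i_t-1$) plus variance ($O(1/n)$ with $n \ge 4^{i_t-1}$ enforced by the bisection rule), and then balance the per-level cost at the critical level $2^{i^\star} \asymp T^{1/(d+2)}$. The only cosmetic difference is in the aggregation step: you argue each cell contributes $O(1)$ and bound $|\sF_T|$ by counting spawned cells per level, whereas the paper phrases the same balancing as a constrained optimization over $\{n_{\cC_{i,j}}\}$ subject to $n_{\cC_{i,j}} \le 4^i$ and $\sum n_{\cC_{i,j}} = T$; and, as the paper observes, the partition evolution is in fact fully deterministic given the adversarial contexts, so the conditioning you flag as a ``main obstacle'' is immediate.
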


For a full proof of this result, see \Cref{section:simplified:fullfeed}.

\begin{proof}[Proof sketch]
	We begin by claiming that the optimal price to propose at any time $t\in \N$ is $P_t \ceq \mu_t$, where $\mu_t$ is the market price at time $t$ (see \Cref{l:structural} in \Cref{s:approximation-appe}), and that posting any other price $p$ would result in a instantaneous regret of order $O\brb{ (\mu_t - p)^2 }$, i.e., quadratic in the distance from the market price $\mu_t$ (again, see \Cref{l:structural} in \Cref{s:approximation-appe}).
	We also note that the updates of $\sF$ during a run of the algorithm are deterministic, since they only depend on the adversarial sequence of contexts $\bx_1,\bx_2,\dots$.
	Using these facts, we can prove that the two different rules the algorithm uses to determine its proposed prices $P_t$ (on \Cref{state:play-Pt-1,state:play-Pt-2}) are sufficiently accurate approximations of $\mu_t$.

	Given that both rules are empirical averages of past traders' valuations coming from rounds in which contexts fell in a common cell, this boils down to quantifying the bias and variance. 
	First, although the empirical averages are \emph{biased} estimates of market prices $\mu_t$, this bias can be controlled. In the case of \Cref{state:if:more:in:cell:Pt}, all the observations $V_s$, $W_s$ are associated to contexts $\bx_s$ in a cell of diameter $O\brb{2^{-i_t}}$, so that by the  Lipschitz property of $\mu_t$, the squared bias has order $O\brb{2^{-2 i_t}}$. 
	By independence, the variance has order $O\brb{\frac{1}{n_t}}$.
	Our bisection condition at \Cref{state:bisection:condition} then ensures that the variance also has order $O\brb{2^{-2 i_t}}$.
	We have a similar conclusion for \Cref{state:if:more:in:parent:Pt}.
	
	The last main step of the proof is to consider any potential cell $\cC$ and to count the number  $n_\cC$  of time steps $t\in[T]$ where this cell is equal to $\cC_t$ on \Cref{state:set:Ct}. Then we can show $n_\cC \le 2^{2 i_\cC}$ with $i_\cC$ being the level of $\cC$ (from \Cref{state:bisection:condition}). By partitioning the cumulated regret according to all these possible cells $\cC$, that can be indexed as $\{\cC_{i,j}\}_{i=0,j=1}^{\infty,2^{id}}$, we obtain an upper bound of the order
	\[
	\sum_{i=0}^{\infty}
	\sum_{j=1}^{2^{id}}
	n_{\cC_{i,j}}
	2^{-2i},
	\]
	with the constraints that
	\[
	n_{\cC_{i,j}} \le 2^{2 i}
	\quad \text{and} \quad 
	\sum_{i=0}^{\infty}
	\sum_{j=1}^{2^{id}} n_{\cC_{i,j}} \le T.
	\]
	Some final technicalities yield the theorem.
\end{proof}

\subsection{Regret lower bound: \biave{} is optimal}

In this section, we prove the optimality of \biave{} in the full-feedback setting, by showing that any other algorithm will pay a 
regret of order at least $T^{\frac{d}{d+2}}$.

\begin{theorem}
	\label{t:lower-bound-full-main}
	In the full-feedback setting, for any time horizon $T$, any algorithm suffers 
	regret
	\[
	R_T
	=
	\Omega \lrb{ T^{\frac{d}{d+2}} }.
	\]
\end{theorem}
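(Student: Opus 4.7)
The plan is a standard nonparametric multi-hypothesis (Assouad-style) lower bound adapted to the brokerage loss. I would set $\Delta \ceq c\,T^{-1/(d+2)}$ with $c>0$ small, pack $K = \Theta(\Delta^{-d})$ grid points $\bx^{(1)},\dots,\bx^{(K)}$ into $[0,1)^d$ at pairwise $\ell_\infty$-distance $\ge 2\Delta$, and assign $N \ceq \lfloor T/K\rfloor = \Theta(T\Delta^d)$ adversarially-ordered rounds to each. The tuning is chosen precisely so that $N\Delta^2 = \Theta(1)$. For every $\sigma \in \{-1,+1\}^K$ this defines an instance $\cI_\sigma \in \J$: the market value at $\bx^{(k)}$ is $\mu_k^\sigma \ceq 1/2 + \sigma_k \Delta/4$ (which is $1$-Lipschitz-extendable because $|\mu_j^\sigma-\mu_k^\sigma| \le \Delta/2 \le \lno{\bx^{(j)}-\bx^{(k)}}_\infty$), and conditional on $\bx_t = \bx^{(k)}$ the valuations $V_t,W_t$ are drawn i.i.d.\ from $f_{\mu_k^\sigma}(v) \ceq \rho(v-\mu_k^\sigma)$, where $\rho$ is a fixed smooth symmetric bump supported on a small interval around $0$, pointwise bounded by $M$, and bounded away from $0$ on a neighborhood of the origin.

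This choice of $\rho$ yields two quadratic estimates. First, a \emph{regret lower bound} $\E[\gft(\mu,V,W)-\gft(p,V,W)] \ge c_1 (p-\mu)^2$ for all $p\in[0,1]$, via a Taylor expansion of $p\mapsto\E[\gft(p,V,W)]$ at its maximizer $p=\mu$ (the relevant second derivative is strictly negative because $\rho$ is bounded away from $0$ near the origin, dovetailing with \Cref{l:structural}). Second, a \emph{KL upper bound} $\mathrm{KL}(f_{\mu_1}\Vert f_{\mu_2}) \le C_1 (\mu_1-\mu_2)^2$, a standard smoothness-of-translation-family fact. In the full-feedback protocol the joint law $\Pb_\sigma$ of the observations $(V_t,W_t)_{t\le T}$ is determined by the instance alone (independent of the learner's actions), and for $\sigma,\sigma^{(k)}$ differing only in coordinate $k$ the two laws disagree only on the $N$ rounds at $\bx^{(k)}$, whence
\[
\mathrm{KL}\brb{\Pb_\sigma \Vert \Pb_{\sigma^{(k)}}} \le C_1 N\Delta^2/2,
\]
which is forced below $1/8$ by choosing $c$ small, so that Pinsker yields $d_{TV}\le 1/4$. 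A two-point Le Cam test applied coordinate by coordinate (Assouad's lemma in Yu's form), combined with the quadratic regret lower bound, shows that the contribution to the expected regret from the rounds at $\bx^{(k)}$, averaged uniformly over $\sigma$, is $\Omega(N\Delta^2)$. Summing over $k = 1,\dots,K$ gives
\[
2^{-K}\sum_\sigma \E_\sigma[R_T] \;\ge\; \Omega(K N\Delta^2) \;=\; \Omega(T\Delta^2) \;=\; \Omega\brb{T^{d/(d+2)}},
\]
and the worst-case instance in $\J$ realizes at least this average.

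The hard part will be engineering the bump $\rho$ so that all four requirements hold simultaneously: compact support in a small interval around $0$, the pointwise bound $\rho \le M$, the lower bound $\rho \ge c_0 > 0$ on a neighborhood of the origin, and enough smoothness to yield a quadratic KL bound. The regret and KL requirements pull against each other (peaked vs.\ smooth), but a truncated $\cos^2$ or a smoothly mollified uniform meets all four; once $\rho$ is fixed the remaining steps---the lattice construction, the Pinsker inequality, the coordinate-wise Assouad bookkeeping, and the extension from deterministic to randomized learners via a measure-preserving enlargement of the sample space---are routine.
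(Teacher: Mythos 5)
Your overall architecture is sound and is essentially the one the paper uses: a lattice of $K^d=\Theta(T^{d/(d+2)})$ context points, $n=\Theta(T^{2/(d+2)})$ rounds per point, mean shift $\Theta(\Delta)$ with $n\Delta^2 = \Theta(1)$, a local quadratic penalty from \Cref{l:structural} that makes every wrong-side price cost $\Omega(\Delta^2)$, and a per-point two-hypothesis test forcing $\Omega(1)$ regret at each lattice site. The difference --- and the gap --- is in the family of densities you use to realize the two hypotheses.

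You propose a \emph{translation family} $f_{\mu}(v)=\rho(v-\mu)$ with $\rho$ a compactly supported bump, and then invoke the ``standard smoothness-of-translation-family fact'' $\mathrm{KL}(f_{\mu_1}\Vert f_{\mu_2})\le C_1(\mu_1-\mu_2)^2$ followed by Pinsker. That KL estimate is false here: if $\rho$ is supported on $[-a,a]$ and $\mu_1<\mu_2$, then on the interval $(\mu_1-a,\mu_2-a)$ one has $f_{\mu_1}>0$ while $f_{\mu_2}=0$, so $\mathrm{KL}(f_{\mu_1}\Vert f_{\mu_2})=+\infty$ for \emph{every} $\mu_1\neq\mu_2$, regardless of how smooth $\rho$ is. (The Fisher-information expansion $\mathrm{KL}\approx\tfrac12 I\,\Delta^2$ presupposes continuity of $\mathrm{KL}$ at $\Delta=0$, which compactly supported location families do not satisfy; the same pathology kills a $\chi^2$ substitute.) Consequently Pinsker gives the vacuous bound $d_{\mathrm{TV}}\le\infty$, and the heart of the Assouad argument --- that the learner cannot detect $\sigma_k$ --- is unproven. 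Your remark that ``the regret and KL requirements pull against each other (peaked vs.\ smooth)'' misdiagnoses the obstruction: it is a support-mismatch problem, not a smoothness problem, and a $\cos^2$ or mollified-uniform bump does \emph{not} meet your fourth requirement.

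Two ways to repair this. (i) Keep the translation family but replace KL/Pinsker by squared Hellinger: $H^2(f_{\mu_1},f_{\mu_2})=\int(\sqrt{\rho(v-\mu_1)}-\sqrt{\rho(v-\mu_2)})^2\dif v$ is finite, is $O(\Delta^2)$ whenever $\sqrt{\rho}$ is Lipschitz (true for $\cos^2$ and for a mollified uniform, since both vanish quadratically at the boundary), tensorizes as $H^2(\otimes^n)\le n H^2$, and dominates TV. (ii) Do what the paper does and use a \emph{fixed-support perturbation family}: both hypothesis densities equal a fixed base density (uniform in the paper) everywhere except on a small fixed interval, where they differ by $\pm\e$, so they are bounded in $[1-\e,1+\e]$ on a common support and the KL is a clean $\Theta(\e^2)$. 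The paper additionally writes the resulting observations as a deterministic function of i.i.d.\ auxiliary uniforms and a single $\mathrm{Bernoulli}\bigl(\tfrac{1\pm\e}{2}\bigr)$, reducing the information-theoretic step to distinguishing two Bernoulli means at separation $\e$, which requires $\Omega(1/\e^2)$ samples --- the same conclusion you are after, but with a finite divergence.

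Either repair turns your sketch into a correct proof, but as written the Pinsker step fails.
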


For a full proof of this result, see \Cref{s:proof-lower-bound-full}.

\begin{proof}[Proof sketch]
	The key idea of the lower bound is to pack the context space $[0,1)^d$ with a lattice of $k$ equispaced points.
	The environment then selects contexts as follows.
	For any given time horizon $T$, it begins by revealing one of the contexts $\bx'$ for $T/k$ consecutive rounds, then moves on to a different context $\bx''$ and reveals this second context for $T/k$ rounds, and so on until all $k$ contexts in the lattice have been revealed for $T/k$ rounds each.
	This way, for each of the $k$ points in the lattice, a learner will observe $T/k$ noisy realizations of its corresponding market value.
	Leveraging this construction, we show that no learner is able to confidently distinguish market values corresponding to consecutive points in the lattice (and, \emph{a fortiori} this cannot be done for contexts that are further away) if their corresponding market values are too close.
	The idea is then to select a sequence of market values compatible with the contexts (i.e., that are close enough if contexts are close enough) that are still far enough to guarantee a sufficiently high regret.
	We do this by setting a threshold level and, for each context in the lattice, randomly raising or lowering the corresponding market value by a small constant.
	The result then follows by tuning the number of elements in the lattice and by proving that these random perturbations of a threshold market value can be done in a way that respects our modeling assumptions.
\end{proof}

\section{LIMITED FEEDBACK}
\label{s:limited-feedback}

In this section, we analyze how efficiently the broker can learn by leveraging limited feedback.

\subsection{\exbis{} and regret upper bound}
\label{s:exbis-and-upper-bound}

In this section, we introduce and analyze our \exbis{} algorithm for the limited-feedback setting.
In addition to the bisection notation presented in \Cref{s:biave-upper-bound}, we introduce here the i.i.d.\ sequence of $[0,1]$-valued uniform random variables $U_1,U_2,\dots$ that are used by \exbis{} as random seeds, and that are independent of the sequence of traders' valuations $V_1,W_1, V_2, W_2, \dots$.
The pseudocode of our \exbis{} algorithm is presented in \Cref{a:lipshitz:twobit:feedback}.
\begin{algorithm}
	\caption{\exbis{} (Exploit, Explore, and Bisect)}
	\textbf{initialization:} Set $\sF \coloneqq \bcb{ [0,1)^d }$
	\begin{algorithmic}[1]
		\State The environment reveals a context $\bx_1 \in [0,1)^d$
		\State Post $P_1 \ceq U_1$ and add $\bisect \brb{ [0,1)^d }$ to $\sF$
		\State Observe $\Vtilde_1 \ceq \I\{P_1\le V_1\}$ and $\Wtilde_1 \ceq \I\{P_1\le W_1\}$
		\For{$t=2,3,\dots$}
		\State The environment reveals a context $\bx_t \in [0,1)^d$ \label{line:envir:xt-exbis}
		\State Let $\cC_t$ be the terminal cell in $\mathscr{F}$ such that $\bx_t \in \cC_t$\label{state:set:Ct:defined}
		\State Let $i_t$ be the level of $\cC_t$\label{state:set:Ct:twobit}
		\State Let $m_t$ be the number of $s\in[t-1]$ such that 
		
		$m_s < 2^{4i_s}$  (exploiting) \textbf{and} $\cC_s = \cC_t$\label{state:Mt}
		\State Let $\cE_t$ be the set of $s \in [t-1]$ such that 
		
		$m_s \ge 2^{4i_s}$ (exploring) \textbf{and} 
		$\bx_s \in \cC_t$
		\State Let $n_t$ be the cardinality of $\cE_t$
		\State Let $q_t$ be the time at which the parent cell $\cC'_t$ of 
		
		$\cC_t$ was bisected
		\State Let $\cE'_t$ be the set of $s \in [q_t]$ such that 
		
		$m_s \ge 2^{4i_s}$ (exploring) \textbf{and} 
		$\bx_s \in \cC'_t$
		\State Let $n'_t$ be the cardinality of $\cE'_t$
		\If{ $m_t < 2^{4 i_t}$\label{state:if:exploit}}\Comment{(exploiting)}
		\State \textbf{if} $n_t \ge n'_t$ \textbf{then} 
		Post $P_t \ceq \frac{1}{2 n_t} \sum_{s \in \cE_t} (\Vtilde_s + \Wtilde_s)$ \label{state:if:first:exploit}\label{state:first:exploit}
		\State \textbf{if} $n_t < n'_t$ \textbf{then} 
		Post $P_t \ceq \frac{1}{2n'_t} \sum_{s \in \cE'_t} ( \Vtilde_s + \Wtilde_s )$\label{state:if:second:exploit}\label{state:second:exploit}
		\ElsIf{$m_t \ge 2^{4 i_t}$\label{if:explore:and:bisect}}\Comment{(exploring)}
		\State Post $P_t \ceq U_t$ \label{state:posting-exploration}
		\State \textbf{if} $n_t \ge 2^{2 i_t} -1$ \textbf{then}
		Add $\bisect( \cC_t )$ to $\sF$ \label{state:bisect}
		\EndIf
		\State Observe $\Vtilde_t \ceq \I\{P_t\le V_t\}$ and $\Wtilde_t \ceq \I\{P_t\le W_t\}$
		\EndFor	
	\end{algorithmic}
	\label{a:lipshitz:twobit:feedback}\vspace{-0.2ex}
\end{algorithm}
The algorithm leverages the adaptive granularity insights of \biave{}, with the additional challenges arising from the limited feedback.
Readers familiar with multiarmed bandits might have noticed that the limited feedback is less informative than the already limited bandit feedback, not even being sufficient to compute the reward function at the posted price.
To get around this roadblock, \exbis{} reserves exploration rounds (\Cref{if:explore:and:bisect}) where uniform prices are posted to allow  gathering estimates of the market value at the cost of a high instantaneous regret.
In the exploitation rounds (\Cref{state:if:exploit}), instead, the algorithm posts an empirical average of the estimates of the market value gathered in past rounds where contexts were close enough to the current one.
Finally, the algorithm locally increases the granularity of the estimates by bisecting areas where sufficiently many contexts have fallen.
We now provide theoretical guarantees for the performance of \exbis{}.

\begin{theorem}
	\label{t:upper-bound-limited-main}
	In the limited-feedback setting, if we run the \exbis{} algorithm for $T$ time steps, its regret satisfies
	\[
	R_T
	=
	O \lrb{ T^{\frac{d+2}{d+4}} }.
	\]
\end{theorem}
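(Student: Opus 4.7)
My plan is to run the same bias-variance template as for \biave{}, while carefully accounting for the additional cost of obtaining unbiased estimates under limited feedback. Fix a round $t$. As in the full-feedback case, \Cref{l:structural} gives an instantaneous regret of order $M(P_t - \mu_t)^2$ whenever $P_t$ is a reasonable estimate of $\mu_t$, and I will use the trivial bound $1$ on exploration rounds (where $P_t = U_t$ is agnostic to $\mu_t$).

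The key new ingredient is a Monte-Carlo identity for exploration rounds. Since $U_s$ is uniform on $[0,1]$ and independent of $(V_s,W_s)$, conditioning gives $\E[\Vtilde_s\mid V_s] = V_s$ and $\E[\Wtilde_s\mid W_s] = W_s$, so $\E[(\Vtilde_s + \Wtilde_s)/2] = \mu_s$. For an exploitation round $t$ in cell $\cC_t$ at level $i_t$, the estimator on \Cref{state:first:exploit} or \Cref{state:second:exploit} is an average of $n_t$ (respectively $n'_t$) conditionally independent $[0,1]$-valued variables whose conditional means are $(V_s + W_s)/2$. The bias is controlled by \Cref{i:two}: whether one averages over $\cE_t$ or $\cE'_t$, the corresponding $\mu_s$ lie within $2^{-(i_t-1)}$ of $\mu_t$, giving squared bias $O(2^{-2 i_t})$. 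The variance is $O\brb{1/\max(n_t,n'_t)}$; the bisection rule on \Cref{state:bisect} enforces $n'_t \ge 2^{2(i_t-1)}$ (with the convention at the root cell giving $n'_t \ge 1$ at level $1$), so $\E\bsb{(P_t-\mu_t)^2} = O(2^{-2 i_t})$ on every exploitation round.

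The final step is a cell-by-cell aggregation. A cell $\cC$ at level $i$ receives at most $2^{4 i}$ exploitation rounds (by the exploit/explore switch on \Cref{state:if:exploit}) and at most $2^{2 i}$ exploration rounds (by \Cref{state:bisect}), so its total expected regret is at most $2^{4 i}\cdot O(2^{-2 i}) + 2^{2 i}\cdot 1 = O(2^{2 i})$. Letting $\cN_i$ be the number of cells at level $i$ ever visited, the constraints are $\cN_i \le 2^{i d}$ (geometric, from packing $[0,1)^d$) and $\sum_i \cN_i \cdot 2^{4 i} \lesssim T$ (time budget, from summing visits). Maximizing $\sum_i \cN_i \cdot 2^{2 i}$ under these constraints, the critical level is $2^{i^\star} \asymp T^{1/(d+4)}$, where the two constraints balance, yielding $R_T = O\brb{T^{(d+2)/(d+4)}}$.

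The main obstacle will be the bias-variance step, which requires a careful filtration argument: although the raw $\Vtilde_s, \Wtilde_s$ are conditionally independent given the valuations and the uniform seeds, the sets $\cE_t$ and $\cE'_t$ are themselves random, depending on which past contexts fell in $\cC_t$ and $\cC'_t$ and on the exploit/explore tag of each past round (the latter being a deterministic function of $\bx_1,\ldots,\bx_t$ alone). One must verify that this adaptive selection does not invalidate the unbiasedness nor the variance bound. A secondary technical point is reconciling $n'_t \ge 2^{2(i_t-1)}$ with the case $n_t < n'_t$ so that the squared bias stays $O(2^{-2 i_t})$ up to a constant; apart from these, the argument reduces to a routine aggregation over cells and levels analogous to the one behind \Cref{t:upper-bound-full-main}.
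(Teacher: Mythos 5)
Your route is essentially the paper's (\Cref{section:simplified:twobitfeed:oned}): the Monte-Carlo identity $\E\bsb{\I\{U_s\le V_s\}\mid V_s}=V_s$ for exploration rounds, the observation that the cells, the explore/exploit tags and the sets $\cE_t,\cE'_t$ are deterministic functions of the context sequence (which settles the filtration worry you raise), the bias--variance bound of order $2^{-2i_t}$ per exploitation round using $n'_t\ge 2^{2(i_t-1)}$ from \Cref{state:bisect}, and the per-cell comparison of exploration cost (at most $2^{2i}$ rounds at regret at most $1$) with exploitation cost (at most $2^{4i}$ rounds at $O(2^{-2i})$ each, by \Cref{state:Mt,state:if:exploit}). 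All of that matches the paper's argument and is sound.

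The one step that does not hold as you justify it is the time-budget constraint in your final optimization. You charge every visited level-$i$ cell its maximal regret $O(2^{2i})$ and assert $\sum_i \cN_i 2^{4i}=O(T)$ ``from summing visits''; but a visited cell need not be full: cells created by a late bisection may receive only a handful of rounds before the horizon ends, so summing actual visits only yields $T$ and does not bound $\sum_i \cN_i 2^{4i}$. Since your objective $\sum_i \cN_i 2^{2i}$ overcharges exactly those nearly-empty cells, the constraint is doing real work and needs a proof. It can be rescued by amortizing over parents: a level-$i$ cell becomes terminal only after its parent (level $i-1$) has accrued $2^{4(i-1)}$ exploitation rounds, and rounds are partitioned among terminal cells, so $\sum_{i\ge1}\cN_i\,2^{4(i-1)}\le 2^d\,T$, i.e.\ your constraint holds up to a factor $2^{d+4}$ and the optimization then goes through. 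The paper sidesteps the issue entirely by keeping the \emph{actual} exploitation counts $n_{\cC_{i,j}}$ in both the objective $\sum_{i,j}n_{\cC_{i,j}}2^{-2i}$ and the constraints $n_{\cC_{i,j}}\le 2^{4i}$, $\sum_{i,j}n_{\cC_{i,j}}\le T$ (\Cref{e:upper-bound-total-exploit,e:constraints-exploit}), so nearly-empty cells are only charged for the rounds they actually receive and the worst case is the ``fill all cells up to the critical level'' configuration, giving $O\brb{2^d M\,T^{(d+2)/(d+4)}}$ with no claim about how full the visited cells are. Either repair yields the stated rate, so the gap is local and fixable, but as written the aggregation step is not justified.
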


For a full proof of this result, see \Cref{section:simplified:twobitfeed:oned}.

\begin{proof}[Proof sketch]
	Similarly to the proof of \Cref{t:upper-bound-full-main}, we begin by observing that the optimal price to propose at any time $t$ is the market price $\mu_t$, that posting any other price $p$ would result in an instantaneous regret of order $O\brb{ (\mu_t - p)^2 }$, and that the updates of $\sF$, as well as all the quantities appearing in the algorithm (with the only exceptions of $\Vtilde$, $\Wtilde$, and $P_t$) are deterministic, since they only depend on the adversarial sequence of contexts $\bx_1,\bx_2,\dots$.
	
	Similarly as when proving Theorem \ref{t:upper-bound-full-main}, we consider any potential cell $\cC_{i,j}$ and the cumulated regret over all time steps $t \in [T]$ where this cell is equal to $\cC_t$ on \Cref{state:set:Ct:defined} of \Cref{a:lipshitz:twobit:feedback}. An important difference compared to \Cref{t:upper-bound-full-main} is that now the regret comes both from exploration and exploitation rounds. For the sequence of time steps $t \in [T]$ where $\cC_{i,j} = \cC_t$ on \Cref{state:set:Ct:defined},
	\Cref{a:lipshitz:twobit:feedback} starts with many exploitation rounds (\Cref{state:if:exploit}), and only when a sufficient number of them is achieved, do exploration rounds occur (\Cref{if:explore:and:bisect}). As a consequence, we can show that it is sufficient to bound the regret stemming from exploitation rounds only. 
	
	As for the proof of \Cref{t:upper-bound-full-main}, the instantaneous regret of one exploitation round is $O(2^{-2i})$.
	The number of these exploitation rounds for $\cC_{i,j}$, $n_{\cC_{i,j}}$, is bounded by $O(2^{4i})$
	from \Cref{if:explore:and:bisect} (in the proof of \Cref{t:upper-bound-full-main}, the bound was $O(2^{2i})$ which explains the final difference of order of bounds between the two theorems).
	In the end, we obtain
	an upper bound of the order
	\[
	\sum_{i=0}^{\infty}
	\sum_{j=1}^{2^{id}}
	n_{\cC_{i,j}}
	2^{-2i},
	\]
	with the constraints that
	\[
	n_{\cC_{i,j}} \le 2^{4 i}
	\quad \text{and} \quad 
	\sum_{i=0}^{\infty}
	\sum_{j=1}^{2^{id}} n_{\cC_{i,j}} \le T.
	\]
	As for the proof of \Cref{t:upper-bound-full-main}, some final technicalities yield the theorem.
\end{proof}

\subsection{Regret lower bound: \exbis{} is optimal}

In this section, we prove the optimality of \exbis{} in the full-feedback setting, by showing that any other algorithm will pay a 
regret of order at least $T^{\frac{d+2}{d+4}}$.

\begin{theorem}
	\label{t:lower-bound-limited-main}
	In the limited-feedback setting, for any time horizon $T$, any algorithm suffers 
	regret
	\[
	R_T
	=
	\Omega \lrb{ T^{\frac{d+2}{d+4}} }.
	\]
\end{theorem}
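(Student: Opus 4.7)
The plan is to retain the lattice skeleton from the proof of Theorem~\ref{t:lower-bound-full-main} but upgrade the two-point construction so that the $1$-bit feedback genuinely hides the sign of the perturbation whenever the learner plays close to optimal, forcing costly exploration. Concretely, I would fix an $\ve>0$, cover $[0,1)^d$ with a lattice $\bx_1,\ldots,\bx_k$ that is $\infty$-norm-separated at scale $\Theta(\ve)$ (so that $k=\Theta(\ve^{-d})$), reveal each $\bx_j$ in a consecutive block of $T/k$ rounds, and draw $\sigma_1,\ldots,\sigma_k$ uniformly in $\{-1,+1\}$, assigning market value $\mu_j=\tfrac12+\sigma_j\ve$. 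A quick verification shows that consecutive lattice points differ by $2\ve$ while their $\infty$-norm distance is $\Theta(\ve)$, so the instance lies in $\J$ up to rescaling the Lipschitz constant.

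The key new ingredient is a pair of valuation laws $D_-,D_+$ on $[0,1]$ with means $\tfrac12-\ve,\tfrac12+\ve$, density bounded by some fixed $M>1$, and \emph{identical} CDFs on a middle interval $I=[a,b]$ (with $a,b$ fixed, e.g.\ $\tfrac14,\tfrac34$) containing both means. I would construct these by taking the uniform density on $[a,b]$ for both laws, and on each of $[0,a]$ and $[b,1]$ perturbing the uniform density by an additive step of size $\Theta(\ve)$ that (i) preserves total mass on each piece (so that $F_+(a)=F_-(a)$ and $F_+(b)=F_-(b)$), and (ii) shifts mass rightward enough to produce the $\pm\ve$ mean gap. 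A short calculation (a density step $\delta=\delta'=\Theta(\ve)$ on quarter-subintervals of $[0,a]$ and $[b,1]$) delivers both properties, ensures all densities stay in $[1-c\ve,1+c\ve]\subseteq[0,M]$, and yields $\sup_{P\in[0,1]}|F_+(P)-F_-(P)|=O(\ve)$ while $F_+\equiv F_-$ on $I$. Consequently the two-bit limited feedback $(\I\{P\le V\},\I\{P\le W\})$ has identical distribution under $D_-$ and $D_+$ whenever $P\in I$, and each price in $I$ is $O(\ve^2)$-suboptimal by the quadratic regret identity of Lemma~\ref{l:structural}, whereas any price outside $I$ is $\Omega(1)$-suboptimal.

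The analysis for the block assigned to a single context $\bx_j$ is then a Le~Cam two-point argument against $\sigma_j$. Since each round's feedback is either two identically distributed Bernoullis (when $P_t\in I$) or two Bernoullis whose KL divergence is $O(\ve^2)$ (when $P_t\notin I$), the standard chain-rule-plus-Pinsker decomposition along the filtration generated by the context-only history and the limited feedback gives $\mathrm{TV}\bigl(\mathrm{Law}_{+1},\mathrm{Law}_{-1}\bigr)\le C\sqrt{N_j^{\mathrm{out}}}\,\ve$, where $N_j^{\mathrm{out}}$ denotes the (random) number of rounds within block $j$ where $P_t\notin I$. Hence to distinguish $\sigma_j$ with constant probability the learner needs $\E[N_j^{\mathrm{out}}]=\Omega(\ve^{-2})$, each such round costing $\Omega(1)$ instantaneous regret; otherwise the algorithm pays the misidentification penalty $\Omega(\ve^2)$ on each of the remaining $T/k-N_j^{\mathrm{out}}$ ``in-$I$'' rounds. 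Either way, the expected block regret is at least $\Omega\bigl(\min\{\ve^{-2},\,(T/k)\ve^2\}\bigr)$, and summing over $j\in[k]$ gives $R_T\ge\Omega\bigl(k\min\{\ve^{-2},(T/k)\ve^2\}\bigr)$.

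Balancing the two terms forces $k=\Theta(T\ve^4)$; combined with the packing constraint $k=\Theta(\ve^{-d})$ this yields $\ve=\Theta\bigl(T^{-1/(d+4)}\bigr)$ and $R_T=\Omega(k\ve^{-2})=\Omega(T^{(d+2)/(d+4)})$, matching the upper bound of Theorem~\ref{t:upper-bound-limited-main}. I expect the main technical obstacle to be the information-theoretic step: the learner's posting strategy is adaptive and can mix rounds inside and outside $I$ in arbitrary ways (possibly across lattice points via some shared internal state), so I will need to run the KL chain rule carefully along the full filtration---conditioning only on $\sigma_j$ within block $j$ while marginalizing over the other $\sigma_{j'}$---to convert the random $N_j^{\mathrm{out}}$ into a deterministic bound on expected regret, and to verify that the density, independence and Lipschitz constraints in Assumption~\ref{i:three} are not violated by the two-point mixture instance.
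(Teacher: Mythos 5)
Your proposal is correct and follows essentially the same route as the paper's proof: a lattice of $\Theta(\e^{-d})$ contexts each shown for $\Theta(\e^{-4})$ consecutive rounds, a pair of valuation laws whose CDFs coincide on the region containing the near-optimal prices (so the limited feedback there is uninformative about the sign of the perturbation) and differ by $O(\e)$ only on an $\Omega(1)$-suboptimal region, a Le Cam/KL argument showing $\Omega(\e^{-2})$ costly exploration rounds are needed per block, and the tuning $\e=T^{-1/(d+4)}$ yielding $\Omega\lrb{T^{(d+2)/(d+4)}}$. The only differences are cosmetic: the paper places the mass-preserving $\pm\e$ density perturbation in the single off-center window $[1/7,2/7]$ rather than in the two tails, and calibrates the mean gap (to $\pm\e/196$) so that the $L=1$ Lipschitz constraint holds exactly rather than up to a constant rescaling.
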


For a full proof of this result, see \Cref{s:proof-lower-bound-limited}

\begin{proof}[Proof sketch]
	For this lower bound, we leverage the same lattice construction we used in the full-feedback case.
	The main difference is that now the feedback depends on the algorithm in a way that allows us to build two different sequences of traders' valuations distributions (i.e., instances) at each lattice point with the following high-level properties: A price $p_1$ is optimal in the first instance and suboptimal on the second; A price $p_2$ is optimal in the second instance and suboptimal in the first; Neither $p_1$ nor $p_2$ reveal any meaningful feedback; there exists a third price $p_0$, which is highly suboptimal in both instances but, every time it is chosen, it reveals some information about the underlying instance being the first or the second one.
	Tuning everything properly and showing that one such construction can be obtained without violating our modeling assumptions gives the result.
\end{proof}

\section{ \texorpdfstring{$\frac{1}{2}$}{(1/2)}-APPROXIMATION OF FIRST-BEST}
\label{s:one-half-approx}

In this section, we show that our theory yields a $\frac12$-approximation of the performance of an omniscient oracle with perfect information about the \emph{realizations} of the traders' valuations. 
This powerful oracle is known in game theory and economics as \emph{first-best}. We also prove that our $\frac12$-approximation of the first-best is tight, i.e., that no approximation rate better than $\frac12$ can be obtained in general.

\begin{theorem}
	\label{t:approximation}
	Suppose that $V$ and $W$ are two bounded non-negative independent random variables admitting bounded densities, with cumulative distributions $F$ and $G$, respectively.
	Assume that $\E[V] = \E[W] \eqqcolon \mu$.
	Then
	\begin{multline*}
		\max_{p\in[0,1]} \E\bsb{\gft(p,V,W)}
		=
		\E\bsb{\gft(\mu,V,W)} 
		\ge 
		\frac{1}{2} \cdot \E\bsb{|W-V|}
		=
		\frac{1}{2} \cdot \E\lsb{ \max_{p\in[0,1]} \gft(p,V,W)}
		\;.
	\end{multline*}
\end{theorem}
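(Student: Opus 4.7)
The final statement packages three facts. The first equality, $\max_p \E\bsb{\gft(p, V, W)} = \E\bsb{\gft(\mu, V, W)}$, asserts that $\mu$ is the optimal deterministic price and is supplied by the structural lemma (\Cref{l:structural}) already referenced in the paper. The last equality, $\E\bsb{|W-V|} = \E\bsb{\max_p \gft(p, V, W)}$, is immediate from the definition of $\gft$: for every realization $(v, w)$ the function $p \mapsto (v \vee w - v \wedge w) \I\{v \wedge w \le p \le v \vee w\}$ attains its maximum $|v-w|$ on the nonempty interval $[v \wedge w, v \vee w]$. So the real content is the middle inequality $\E\bsb{\gft(\mu, V, W)} \ge \frac12 \E\bsb{|V - W|}$.

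My plan is to centre the variables. Set $X \ceq V - \mu$ and $Y \ceq W - \mu$; these are independent, zero-mean, and each admits a density, so $\Pb(X = 0) = \Pb(Y = 0) = 0$. Up to a null set the trade event $\{V \wedge W \le \mu \le V \vee W\}$ coincides with $\{XY < 0\}$, so the middle inequality is equivalent to
\[
\E\bsb{|X - Y| \I\{XY > 0\}} \;\le\; \E\bsb{|X - Y| \I\{XY < 0\}}.
\]
I would compute both sides by splitting $\{XY > 0\} = \{X>0, Y>0\} \cup \{X<0, Y<0\}$ and $\{XY < 0\} = \{X>0, Y<0\} \cup \{X<0, Y>0\}$. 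The key observation is that on the opposite-sign pieces the triangle inequality is an identity, $|X - Y| = |X| + |Y|$, whereas on the same-sign pieces it is merely an upper bound, $|X - Y| \le |X| + |Y|$.

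Using independence of $X$ and $Y$, each of the four contributions factorizes into products of the form $\E\bsb{|X| \I\{X > 0\}} \, \Pb(Y > 0)$ and their sign-reflected analogues. The zero-mean constraint gives $\E\bsb{X^+} = \E\bsb{X^-} = \frac12 \E\bsb{|X|}$ (and likewise for $Y$), and the density assumption gives $\Pb(Y > 0) + \Pb(Y < 0) = 1$. After this simplification, both the same-sign upper bound and the opposite-sign identity collapse to the same value $\frac12 \brb{\E\bsb{|X|} + \E\bsb{|Y|}}$. Comparing the two yields the required inequality and concludes the proof; recalling $\E\bsb{|V-W|} = \E\bsb{|X-Y|}$ recovers the desired $1/2$-approximation.

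The only step that introduces slack is the bound $|X - Y| \le |X| + |Y|$ on the same-sign event, and this is precisely where the factor $1/2$ originates. Everything else is routine bookkeeping once the centring, independence, and density assumptions are in place.
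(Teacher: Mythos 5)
Your proposal is correct, and its core argument is genuinely different from the paper's. The paper handles the two easy equalities the same way you do (the first via \Cref{l:structural}, the last by noting the pointwise maximum of $\gft(\cdot,v,w)$ is $|v-w|$), but it proves the middle inequality through distribution functions: integrating by parts twice to get $\E\bsb{|W-V|}=\int_0^{\infty}F(1-G)+\int_0^{\infty}G(1-F)$, then choosing an auxiliary ``balanced'' price $p$ with $(F+G)(p)=1$ (using continuity of $F+G$), lower bounding $\E\bsb{\gft(p,V,W)}$ by half that expression via $0\le F,G\le 1$, and finally invoking \Cref{l:structural} to pass from $p$ to $\mu$. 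You instead work directly at the price $\mu$: centring $X\ceq V-\mu$, $Y\ceq W-\mu$, identifying the trade event with $\{XY<0\}$ up to a null set, and showing $\E\bsb{|X-Y|\I\{XY>0\}}\le\E\bsb{|X-Y|\I\{XY<0\}}$ by using the exact identity $|X-Y|=|X|+|Y|$ on opposite signs versus the triangle inequality on equal signs, with independence and $\E[X^+]=\E[X^-]=\tfrac12\E|X|$ collapsing both sides to $\tfrac12(\E|X|+\E|Y|)$ — I checked the bookkeeping and it is sound, and it is tight on the paper's own extremal example. Your route is more elementary (no integration by parts, no auxiliary balanced price), needs only that $V$ and $W$ have no atom at $\mu$ rather than continuity of $F+G$, and it localises the source of the factor $\tfrac12$ cleanly in the same-sign triangle inequality; the paper's route has the side benefit of producing the explicit representation of $\E\bsb{|W-V|}$ in terms of $F$ and $G$, which it records as an intermediate identity. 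Both proofs rely equally on \Cref{l:structural} for the optimality of posting $\mu$, so neither is more self-contained on that point.
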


\begin{proof}
	Integrating by part twice, we get
	{\allowdisplaybreaks
		\begin{align*}
			&
			\int_0^{+\infty} F(\lambda) \lrb{1-G(\lambda)}\dif \lambda
			\\
			&=
			\lim_{u \to \infty }\lsb{\int_0^{\lambda}F(v) \dif v \lrb{1-G(\lambda)}}_{\lambda = 0}^u 
			+ \int_0^{+\infty}  \int_0^{\lambda}F(v) \dif v \dif G(\lambda)
			\\
			&=
			\int_0^{+\infty} \int_0^{\lambda}F(v) \dif v \dif G(\lambda) 
			=
			\E\lsb{\int_0^W F(v) \dif v}
			\\
			&=
			\E \lsb{ \bsb{-(W-v)F(v)}_{v=0}^W + \int_0^W (W-v)\dif F(v) }
			\\
			&=
			\E \lsb{ \int_0^W \!\! (W-v)\dif F(v) }
			\\
			&
			=
			\E \lsb{ \int_0^{+\infty} \!\! (W-v) \I\{v\le W\}\dif F(v) }
			\\
			&=
			\E \Bsb{ \E \bsb{ (W-V) \I\{V\le W\} \mid W} }
			\\
			&
			=
			\E \bsb{ (W-V) \I\{V\le W\} } \;.
		\end{align*}
	}%
	Analogously, switching the role of $V$ and $W$, we can prove that
	\[
	\int_0^{+\infty} G(\lambda) \brb{1-F(\lambda)}\dif \lambda
	=
	\E \bsb{ (V-W) \I\{W\le V\} }\;. 
	\]
	It follows that
	\begin{align}
		\label{eq:best-price}
		&
		\E\bsb{|W-V|}
		=
		\E \bsb{ (W-V) \I\{V\le W\} } \nonumber
		+ 
		\E \bsb{ (V-W) \I\{W\le V\} } \nonumber
		\\
		&=
		\int_0^{+\infty} \!\!\!\!\!\! F(\lambda) \brb{1-G(\lambda)}\dif \lambda
		+
		\int_0^{+\infty} \!\!\!\!\!\! G(\lambda) \brb{1-F(\lambda)}\dif \lambda . 
	\end{align}
	Now, given that $(F+G)(0) = 0$, that $\lim_{u \to +\infty}(F+G)(u)=2$ and that $F+G$ is continuous, there exists (and we fix) $p \in (0,+\infty)$ such that $(F+G)(p)=1$. Then
	{\allowdisplaybreaks
		\begin{align*}
			&
			\E\bsb{\gft(\mu,V,W)}
			\ge
			\E\bsb{\gft(p,V,W)}
			\\
			&
			=
			\int_0^p (F+G)(\lambda)\dif \lambda + (\mu-p)(F+G)(p)
			\\
			&
			=
			\int_0^p (F+G)(\lambda)\dif \lambda + \mu-p
			\\
			&
			=
			\int_0^p \!\! (F+G)(\lambda)\dif \lambda + \frac{1}{2} \int_0^{+\infty} \!\!\!\!\!\!\! \brb{ 1-F(\lambda) + 1 - G(\lambda) } \dif \lambda-p
			\\
			&=
			\int_0^p (F+G)(\lambda)\dif \lambda + \frac{1}{2} \int_0^{p} \brb{ 1-F(\lambda) + 1 - G(\lambda) } \dif \lambda 
			\\
			&
			\quad 
			+ \frac{1}{2} \int_p^{+\infty} \brb{ 1-F(\lambda) + 1 - G(\lambda) } \dif \lambda -p
			\\
			&=
			\frac{1}{2}\int_0^p F(\lambda)\dif \lambda + \frac{1}{2}\int_0^p G(\lambda)\dif \lambda 
			\\
			&
			\quad
			+ \frac{1}{2} \int_p^{+\infty} \brb{ 1-F(\lambda)} \dif \lambda + \frac{1}{2} \int_p^{+\infty} \brb{1 - G(\lambda) } \dif \lambda
			\\
			&\ge
			\frac{1}{2}\int_0^p F(\lambda)\brb{1-G(\lambda)}\dif \lambda + \frac{1}{2}\int_0^p G(\lambda)\brb{1-F(\lambda)}\dif \lambda 
			\\
			&
			\quad
			+\frac{1}{2} \int_p^{+\infty} \!\!\!\!\!\! G(\lambda)\brb{ 1-F(\lambda)} \dif \lambda + \frac{1}{2} \int_p^{+\infty} \!\!\!\!\!\!F(\lambda)\brb{1 - G(\lambda) } \dif \lambda
			\\
			&
			=
			\frac{1}{2}\int_0^{+\infty} \!\!\!\!\!\! F(\lambda)\brb{1-G(\lambda)}\dif \lambda + \frac{1}{2}\int_0^{+\infty} \!\!\!\!\!\! G(\lambda)\brb{1-F(\lambda)}\dif \lambda
			\\
			&
			=
			\frac{1}{2} \cdot \E\bsb{|W-V|} \;,
		\end{align*}
	}%
	where the first inequality and the first equality follow from \cref{l:structural} in \Cref{s:approximation-appe}, the second equality by the definition of $p$, the second inequality from the fact that $0\le F \le 1$ and $0 \le G \le 1$, and the last equality from \cref{eq:best-price}.
\end{proof}

The following lemma shows that the previous $\frac12$-approximation guarantee is unimprovable in general.

\begin{theorem}
	\label{t:approximation-tight}
	For each $\varepsilon>0$, there exist two independent $[0,1]$-valued random variables $V$ and $W$ admitting bounded densities and with common expectation $\mu$ such that
	\[
	\E\bsb{\gft(\mu,V,W)}
	\le
	\lrb{\frac{1}{2}+\varepsilon} \cdot \E\bsb{|W-V|}\;.
	\]
\end{theorem}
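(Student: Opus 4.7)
The plan is to construct, for every $\varepsilon>0$, a pair of independent $[0,1]$-valued random variables with matching means and bounded densities for which the ratio $\E[\gft(\mu,V,W)]/\E[|V-W|]$ is arbitrarily close to $1/2$. The key intuition is that the bound $1/2$ is approached when, with non-negligible probability, both $V$ and $W$ sit on the same side of the common mean $\mu$ but at very different levels: this makes $|V-W|$ large while $\mu$ fails to lie in $[V\wedge W,V\vee W]$, so the posted price $\mu$ misses the trade entirely.

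First, I would study an atomic toy model to locate the extremal ratio. Fix a small $q\in(0,1/2)$ and let $\tilde V$ take the values $0$ and $q/(1-q)$ with probabilities $q$ and $1-q$, and let $\tilde W$ take the values $0$ and $1$ with probabilities $1-q$ and $q$. Both have common mean $\mu=q$. A four-case enumeration at price $\mu=q$ shows that a trade occurs only when $(\tilde V,\tilde W)\in\{(0,1),(q/(1-q),0)\}$, yielding $\E[\gft(\mu,\tilde V,\tilde W)] = q^{2}\cdot 1 + (1-q)^{2}\cdot q/(1-q) = q$. The missed-trade case $(\tilde V,\tilde W)=(q/(1-q),1)$ has probability $q(1-q)$ and contributes $\approx 1$ to $|\tilde V-\tilde W|$, and a direct computation gives $\E[|\tilde V-\tilde W|] = 2q(1-q)$. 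Thus the ratio equals $1/(2(1-q))$, which tends to $1/2$ from above as $q\to 0^{+}$.

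Second, I would replace each atom by a thin uniform distribution to obtain bounded densities while preserving the calculation. Put the atom at $0$ on $[0,\delta]$, the atom at $q/(1-q)$ on an interval of width $\delta$ centered near $q/(1-q)$, and the atom at $1$ on $[1-\delta,1]$; the resulting densities are piecewise constant of height at most $\max(q,1-q)/\delta$. Re-tune one parameter (e.g.\ the center of $V$'s upper interval) to restore $\E[V]=\E[W]\eqqcolon\mu_{\delta}$; standard intermediate-value reasoning shows that this is possible for all sufficiently small $\delta$, with $\mu_{\delta}\to q$. Both $\E[\gft(\mu_{\delta},V,W)]$ and $\E[|V-W|]$ depend continuously on $\delta$, so the smoothed ratio converges to $1/(2(1-q))$ as $\delta\to 0$.

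Finally, given $\varepsilon>0$, I pick $q$ with $1/(2(1-q))<1/2+\varepsilon/2$ and then $\delta>0$ small enough that the perturbation from smoothing keeps the ratio below $1/2+\varepsilon$. The main obstacle is the bookkeeping in the second step: after smoothing and re-centering, one must carefully track how each of the four product regions $[0,\delta]^{2}$, $[0,\delta]\times[1-\delta,1]$, etc., contributes to $\E[\gft(\mu_{\delta},V,W)]$ and to $\E[|V-W|]$, and verify that the small corrections are uniformly $O(\delta)$. This is routine continuity bookkeeping rather than a conceptual difficulty.
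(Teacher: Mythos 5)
Your approach is correct in spirit, but it is a genuinely different route from the paper's, and it leaves the heaviest step unproved. The paper's construction is direct and exact: it takes $V_\delta$ with density $\frac{1}{2\delta}$ on $[0,\delta]\cup[1-\delta,1]$ (bimodal near the endpoints) and $W_\delta$ with density $\frac{1}{2\delta}$ on $[\frac12-\delta,\frac12+\delta]$ (concentrated at the center), both with mean $\mu=\frac12$ by symmetry. An elementary integral then gives the exact values $\E[\gft(\tfrac12,V_\delta,W_\delta)]=\tfrac14$ and $\E[|V_\delta-W_\delta|]=\tfrac{1-\delta}{2}$, so the ratio is $\tfrac{1}{2(1-\delta)}$ and one simply chooses $\delta=\tfrac{2\varepsilon}{1+2\varepsilon}$ to get equality at the level $\tfrac12+\varepsilon$. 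No limit argument or re-centering is needed. Your construction starts from an asymmetric two-atom pair with mean $q\to 0$, obtains the ratio $\tfrac{1}{2(1-q)}$ there (your arithmetic checks out), and then proposes to smooth each atom, re-tune a center to restore matched means, and pass to the limit $\delta\to 0$. This works, but it turns a two-line closed-form computation into a convergence argument: you need to justify that $\E[\gft(\mu_\delta,V,W)]\to\E[\gft(q,\tilde V,\tilde W)]$ jointly as $\mu_\delta\to q$, and since $\gft(p,\cdot,\cdot)$ is discontinuous across $\{v=p\}\cup\{w=p\}$ you must check this set is null for the limit law (it is, because neither $0$, $q/(1-q)$, nor $1$ equals $q$ for $q\in(0,1)$), and separately bound the $O(\delta)$ corrections. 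You flag this as ``routine continuity bookkeeping,'' which is fair, but it is precisely the work the paper's symmetric construction avoids. If you adopt it, I'd suggest forcing symmetry from the start (e.g.\ both means $=\tfrac12$), which lets you pin down the smoothed distributions without any re-centering and makes the two expectations computable in closed form, as in the paper.
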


For a full proof of this result, see  \Cref{s:approximation-appe}.

\begin{proof}[Proof sketch]
	The proof leverages the fact that $V$ and $W$ are free to have different distributions as long as they share the same expected value.
	The idea is to determine two different distributions (say, the first for $V$ and the second for $W$) whose shared expectation is $1/2$ and, while the first is symmetric around $1/2$ and highly concentrated near $0$ and $1$, the second one is still symmetric around $1/2$ but highly concentrated around $1/2$.
	From \cref{l:structural} (in \Cref{s:approximation-appe}) we know that the best fixed price is $1/2$.
	By posting $1/2$, the broker manages to let trades happen when $V$ has a value close to $0$ and $W$ has a value slightly greater than $1/2$, or when $V$ has a value close to $1$ and $W$ has a value slightly smaller than $1/2$, leading to an expected reward that is slightly above $1/4$.
	On the other hand, the first-best manages to let trades happen in all the previous cases, but also when $V$ is close to $0$ and $W$ is slightly smaller than $1/2$ and when $V$ is close to $1$ and $W$ is slightly bigger than $1/2$, for an extra expected reward that is slightly below $1/4$.
\end{proof}

\section{CONCLUSIONS AND FUTURE WORK}

In this paper, we investigated a \emph{Lipschitz} contextual brokerage problem, extending the classical brokerage problem to a non-parametric contextual setting. 
We designed two algorithms, \biave{} and \exbis{}, to minimize regret in full and limited feedback settings, respectively. Our results provide tight regret bounds, specifically $ O(T^{d/(d+2)}) $ for the full-feedback setting and $ O(T^{(d+2)/(d+4)}) $ for the limited-feedback setting, demonstrating the optimality of these approaches. 
Furthermore, we established a $\frac12$-approximation factor between the performance of our algorithms and an omniscient oracle, proving that this approximation is unimprovable.

Our findings offer significant theoretical contributions to the study of brokerage problems in online learning and commerce applications involving contextual information. 
By relaxing parametric assumptions and focusing on non-parametric methods, we offer broad applicability to real-world over-the-counter (OTC) markets, where trade conditions and valuations are often influenced by contextual factors.

A natural extension of this work would involve relaxing the assumptions about market value fluctuations to accommodate more general stochastic processes. 
This could allow the model to be applied in more volatile or uncertain market environments.
Finally, while this work fully fleshed out the full and limited feedback settings, future work could explore other feedback models, such as partial trader disclosures, which could yield additional insights into regret minimization approaches in economics.




\appendix

\section{PROOF OF THEOREM \ref{t:upper-bound-full-main}}
\label{section:simplified:fullfeed}

Note first that the evolution of $\sF$ during a run of \biave{} is deterministic, since the decision to refine it or not (\Cref{state:bisection:condition}) depends only on the adversarial sequence of contexts $\bx_1,\bx_2,\dots$. 
For the same reason, for any time step $t\in\N$, $\cC_t$, $\cC_t'$, $i_t$, $n_t$, $q_t$ (when defined), and $n'_t$ are deterministic. 
Then, for any time step $t$ where the property $n_t \ge n'_t$ on \Cref{state:if:more:in:cell} holds (which is, again, a deterministic event), \Cref{l:structural} implies that the instantaneous regret of \biave{} satisfies
\begin{align*}
	&
	\sup_{p_t\in[0,1]} \E \bsb{ \gft ( p_t, V_t, W_t ) } - \E \bsb{ \gft ( P_t, V_t, W_t ) }
	\\
	&
	\qquad
	\le
	M \E \left[ 
	\left(
	\mu_t
	-
	\frac{1}{2n_t} \sum_{s=1}^{t-1} \I\{\bx_s \in \cC_t \} (V_s + W_s)
	\right)^2
	\right] 
	\\
	&
	\qquad
	= 
	M \lrb{ \E \left[ 
		\mu_t
		-
		\frac{1}{n_t} \sum_{s=1}^{t-1} \I\{\bx_s \in \cC_t \} \mu_s
		\right] }^2
	+
	M \Var \left[ \frac{1}{2n_t} \sum_{s=1}^{t-1} \I\{ \bx_s \in \cC_t \} (V_s + W_s) \right]
	\\ 
	& 
	\qquad
	\le
	M 
	2^{-2i_t}
	+ 
	\frac{M}{2n_t},
\end{align*}
where in the last inequality, we used the fact that the sum is over rounds $s\in[t]$ such that $\bx_s\in\cC_t$ and that cell $\cC_t$ has level $i_t$, which implies, by \Cref{i:two} in our model, that $\labs{\mu_s - \mu_t} \le 2^{-i_t}$. 
Moreover, in the same time steps $t$ where the property $n_t \ge n'_t$ on \Cref{state:if:more:in:cell} holds,
since $\sqrt{n'_t} \ge 2^{i_t-1}$ by \Cref{state:bisection:condition}, we obtain that $\sqrt{n_t} \ge 2^{i_t-1}$, which plugged into the right-hand side of the previous chain of inequalities yields that the instantaneous regret of \biave{} is upper bounded by $3M \cdot 2^{-2 i_t}$. 

We now prove that a similar bound holds in the complementary set of rounds $t\in\N$ in which property $n_t<n'_t$ on \Cref{state:if:more:in:parent} is true; indeed, in any of these rounds $t$, reasoning as above, the instantaneous regret of \biave{} satisfies
\begin{align*}
	&
	\sup_{p_t\in[0,1]} \E \bsb{ \gft ( p_t, V_t, W_t ) } - \E \bsb{ \gft ( P_t, V_t, W_t ) }
	\\
	&
	\qquad 
	\le 
	M \E \left[ 
	\left( 
	\mu_t
	-
	\frac{1}{2n'_t} \sum_{s=1}^{q_t} \I\{\bx_s \in \cC'_t \} (V_s + W_s)
	\right)^2
	\right] 
	\\
	&
	\qquad
	= 
	M \E \left[
	\mu_t
	-
	\frac{1}{n'_t} \sum_{s=1}^{q_t} \I\{\bx_s \in \cC'_t \} \mu_s
	\right]^2
	+
	M \Var \left[ \frac{1}{2 n'_t} \sum_{s=1}^{q_t} \I\{\bx_s \in \cC'_t \} (V_s + W_s) \right]
	\\ 
	& 
	\qquad
	\le 
	M 2^{-2(i_t-1)}
	+ 
	\frac{M}{2n'_t}.
\end{align*}
Because in time steps $t$ in which property $n_t<n'_t$ on \Cref{state:if:more:in:parent} is true, the parent cell $\cC'_t$ was bisected, we have from \Cref{state:bisection:condition} that $\sqrt{n'_t} \ge 2^{i_t - 1}$, which plugged into the right-hand side of the previous chain of inequalities yields that the instantaneous regret of \biave{} is upper bounded by $6M \cdot 2^{-2 i_t} $. 

Therefore, for \emph{all} time steps $t\in\N$, the instantaneous regret of \biave{} satisfies
\[
\sup_{p_t\in[0,1]} \E \bsb{ \gft ( p_t, V_t, W_t ) } - \E \bsb{ \gft ( P_t, V_t, W_t ) }
\le  
6M \cdot 2^{-2 i_t}.
\]
We now show that this is sufficient to prove that the regret (i.e., the worst-case sum over $t\in[T]$ of all instantaneous regrets) is upper bounded by $T^{d/(d+2)}$, up to constants. 

To see it, begin by considering any cell $\cC$ that can be obtained by successive bisections of $[0,1)^d$ (i.e., one of the cells that could be generated by \biave{}, introduced at the beginning of \Cref{s:biave-upper-bound}), and denote by $n_\cC$ the number of time steps $t\in[T]$ where this cell is equal to $\cC_t$ on \Cref{state:set:Ct}; 
then, from \Cref{state:bisection:condition}, we have $\sqrt{n_\cC} \le 2^{i_\cC}$, with $i_\cC$ being the level of $\cC$. 
Therefore, we can bound the regret as follows. 
For any level $i \in \N$, let $\cC_{i,1} , \ldots , \cC_{i,2^{id}}$ be the $2^{id}$ cells of the form $[b_1 , b_1 + 1/2^i) \times \dots \times [b_d , b_d + 1/2^i)$, for $b_1,\ldots,b_d \in \{0 , \dots, \frac{2^i - 1}{2^i}\}$. 
Then, putting everything together, we get
\begin{equation}
	\label{e:regret-ub-full-info-proof}
	R_T
	\le 
	6M
	\sum_{i=0}^{\infty}
	\sum_{j=1}^{2^{id}}
	n_{\cC_{i,j}}
	2^{-2i},
\end{equation}
with the constraints that (letting $\N_0 \ceq \{0,1,2,\dots\}$):
\begin{equation}
	\label{e:constraints}
	n_{\cC_{i,j}} \le 2^{2 i},
	\ \forall i \in \N_0, \forall j \in [2^{id}],
	\quad \text{and} \quad 
	\sum_{i=0}^{\infty}
	\sum_{j=1}^{2^{id}} n_{\cC_{i,j}} = T.
\end{equation}
Let $k$ be the smallest integer such that 
\[
\sum_{i=1}^k 
2^{id}
2^{2i} \ge T.
\]
Note that we have 
\[
\frac{1 - 2^{(k+1)(d+2)}}{1 - 2^{d+2}}
\ge T
\]
and thus
\[
2^{(k+1)(d+2)}
\ge 
1 + (2^{d+2}-1)T.
\]
Also we have, by definition of $k$,
\[
\sum_{i=1}^{k-1} 
2^{id}
2^{2i} < T.
\]
Thus
\[
\frac{1 - 2^{k(d+2)}}{1 - 2^{d+2}}
< T
\]
and consequently
\[
2^{k(d+2)}
< 
1 + (2^{d+2}-1)T.
\]
By the constraints in \Cref{e:constraints} and the definition of $k$, \Cref{e:regret-ub-full-info-proof} implies that
\begin{align*}
	\frac{1}{6M}
	R_T
	\le &
	\sum_{i=0}^k
	2^{id} 2^{2i} 2^{-2i}
	\\
	= &
	\frac{1 - 2^{(k+1)d}}{1-2^d}
	\\ 
	\le &
	\frac{1}{2^d-1}
	2^{(k+1)d}
	\\ 
	= &
	\frac{1}{2^d-1}
	2^{k(d+2)}
	2^{d-2k} 
	\\ 
	\le &
	\frac{1+(2^{d+1}-1)T}{2^d-1} 
	2^{d-2k}.
\end{align*}
Now, recalling that
\[
2^{(k+1)(d+2)}
\ge 
1 + (2^{d+2}-1)T,
\]
we obtain
\[
2^{k(d+2)}
\ge 
\frac{ 2^{d+2}-1
}{
	2^{d+2}
}
T
\]
and consequently
\[
2^{k}
\ge 
\left(
\frac{ 2^{d+2}-1
}{
	2^{d+2}
}
\right)^{\frac{1}{d+2}}
T^{\frac{1}{d+2}}.
\]
Hence
\begin{align*} 
	\frac{1}{6M}
	R_T
	&
	\le
	\frac{2^d\brb{ 1+(2^{d+1}-1) }}{2^d-1}
	T 
	2^{-2k}
	\\
	&
	\le 
	\frac{1+(2^{d+1}-1) }{
		\left(
		\frac{ 2^{d+2}-1
		}{
			2^{d+2}
		}
		\right)^{\frac{2}{d+2}}
	}
	\cdot
	T^{1-\frac{2}{d+2}}
	\\ 
	& =
	\frac{2^{d+1}}{
		\left(
		\frac{ 2^{d+2}-1
		}{
			2^{d+2}
		}
		\right)^{\frac{2}{d+2}}
	}
	\cdot
	T^{\frac{d}{d+2}}
	\\
	&
	=
	\frac{4\cdot2^{d+1}}{\left(2^{d+2}-1\right)^{\frac{2}{d+2}}}\cdot T^{\frac{d}{d+2}}
	\\
	&
	=    \frac{4\cdot2^{d+1}}{4\left(1-\frac{1}{2^{d+2}}\right)^{\frac{2}{d+2}}}\cdot T^{\frac{d}{d+2}}
	\\
	&
	\le
	\frac{2}{\left(\frac{7}{8}\right)^{2/3}}2^{d}\cdot T^{\frac{d}{d+2}},
\end{align*}
which, using the numerical inequality $6\cdot 2 \cdot \left(\frac{8}{7}\right)^{2/3} \le 14$, yields
\[
R_T
\le
14 \cdot M \cdot 2^{d} \cdot T^{\frac{d}{d+2}}
=
O \lrb{ T^{\frac{d}{d+2}} }
\]
and concludes the proof.

\section{PROOF OF THEOREM \ref{t:lower-bound-full-main}}
\label{s:proof-lower-bound-full}

Fix $T\in \N$.
Assume without loss of generality that $K \ceq T^{\frac{1}{d+2}}$ is an integer, and note that $K^d$ divides $T$.
Let $n \ceq \frac{T}{K^d} = T^{2/(d+2)} = K^2 \in \N$ and $\e \ceq n^{-1/2} = T^{-1/(d+2)}$.
Let
$
f_{\pm \e} 
\coloneqq
1
\mp \e \I_{\lsb{\frac17,\frac3{14}}} 
\pm \e \I_{ \left( \frac3{14},\frac27 \right] }
$.
Note that $0 \le f_{\pm\e} \le 2$ and $\int_0^1 f_{\pm\e}(x) \dif x = 1$, hence $f_{\pm\e}$ is a valid density on $[0,1]$ bounded by $M=2$.
We will denote the corresponding probability measure by $\cD_{\pm\e}$ and define $\mu_{\pm\e} \coloneqq \int_{[0,1]} x \dif \cD_{\pm\e}(x) = \frac12 \pm \frac{\e}{196}$.
Consider for each $q \in [0,1]$, an i.i.d.\ sequence $(B_{q,t})_{t \in \N}$ of Bernoulli random variables of parameter $q$, an i.i.d.\ sequence $(\tilde{B}_t)_{t \in \N}$ of Bernoulli random variables of parameter $1/7$, an i.i.d.\ sequence $(U_t)_{t \in \N}$ of uniform random variables on $[0,1]$, such that $\lrb{(B_{q,t})_{t \in \N, q \in [0,1]} , (\tilde{B}_t)_{t \in \N}, (U_t)_{t \in \N}}$ is an independent family. 
Let $\varphi \colon [0,1] \to [0,1]$ be such that, if $U$ is a uniform random variable on $[0,1]$, then the distribution of $\varphi(U)$ has density $\frac{7}{6} \cdot \I_{[0,1]\m [\fracc{1}{7},\fracc{2}{7}]}$ (which exists by the Skorokhod representation theorem, \citealt[Section 17.3]{williams1991probability}).
For each $t \in \N$, define
\begin{multline}
	G_{\pm\e,t}
	\coloneqq
	\biggl( \frac{2+U_t}{14}(1-B_{\frac{1\pm\e}{2},t}) 
	+ \frac{3+U_t}{14}B_{\frac{1\pm\e}{2},t} \biggr) \tilde{B}_t + \varphi(U_t)(1-\tilde{B}_t) \;,
	\label{e:representation_of_V-full}
\end{multline}
$V_{\pm\e,t} \coloneqq G_{\pm\e,2t-1}$, $W_{\pm\e,t} \coloneqq G_{\pm\e,2t}$, $\xi_{\pm\e,t} \coloneqq V_{\pm\e,t} - \mu_{\pm\e}$, and $\zeta_{\pm\e,t} \coloneqq W_{\pm\e,t} - \mu_{\pm\e}$.

In the following, if $a_1,\dots,a_{K^d}$ is a sequence of elements, we will use the notation $a_{1:K^d}$ as a shorthand for $(a_1,\dots,a_{K^d})$. 
For each $\e_1,\dots,\e_{K^d} \in \{-\e,\e\}$, each $i \in [K^d]$, and each $j \in [n]$, define the random variables $\xi^{\e_{1:K^d}}_{j+(i-1)n} \coloneqq \xi_{\e_i,j+(i-1)n}$ and $\zeta^{\e_{1:K^d}}_{j+(i-1)n} \coloneqq \zeta_{\e_i,j+(i-1)n}$.
One can check that
the family $\brb{\xi^{\e_{1:K^d}}_{t},\zeta^{\e_{1:K^d}}_{t}}_{t\in [T], \e_{1:K^d}\in\{- \e , \e\}^{K^d}}$ is an independent family, and for each $i \in [K^d]$ and each $j \in [n]$ the two random variables $\xi^{\e_{1:K^d}}_{j+(i-1)n},\zeta^{\e_{1:K^d}}_{j+(i-1)n}$ are zero mean with common distribution given by a shift by $\mu_{\e_i}$ of $\cD_{\e_i}$.
For each $\e_1,\dots,\e_{K^d} \in \{-\e,\e\}$, for each $i \in [K^d]$ and $j \in [n]$, let $V^{\e_{1:{K^d}}}_{j+(i-1)n} \coloneqq \mu_{\e_i} + \xi^{\e_{1:K^d}}_{j+(i-1)n}$ and $W^{\e_{1:K^d}}_{j+(i-1)n} \coloneqq \mu_{\e_i} + \zeta^{\e_{1:K^d}}_{j+(i-1)n}$.
Note that these last two random variables are $[0,1]$-valued zero-mean perturbations of $\mu_{\e_i}$ with shared density given by $f_{\e_i}$, and hence bounded by $2$.

Crucially, we assume that the learner knows what will be the sequence of the contexts in advance, and hence we can restrict the proof to deterministic algorithms without any loss of generality.
Specifically, define, for all $(i_1, \dots, i_d)\in[K]^d$ and $j\in[n]$, the lattice points
\[
\bx_{i_1,\ldots,i_d, \, j}
\ceq
\left(  
\frac{i_1-1}{K}
,
\ldots 
, 
\frac{i_d-1}{K}
\right).
\]
Then, define the contexts $(\bx_t)_{t\in[T]}$ as the contexts corresponding to the lexicographic (increasing) ordering of the indices of $\brb{\bx_{i_1,\ldots,i_d, j}}_{(i_1,\ldots,i_d,j)\in[K]^d\times[n]}$, where vectors of indices $(i_1,\ldots,i_d,j)$ are thought of as digits composing a numerical string. 
In words, the first $n$ contexts $\bx_1, \dots, \bx_n$ are all equal to $\bx_{1,\ldots,1, 1} = \dots = \bx_{1,\ldots,1, n} = (0,\ldots,0)$, the next $n$ contexts $\bx_{n+1}, \dots, \bx_{2n}$ are $\bx_{1,\ldots,1,2, 1} = \bx_{1,\ldots,1,2, n} = (0,\ldots,1/K)$, and so on, until the last $n$ contexts $\bx_{T-n+1} = \dots = \bx_T$ that are all equal to $\brb{ \frac{K-1}{K},\ldots,\frac{K-1}{K} }$. 
Now, notice that since $\e = 1/K$ then for any $\e_{1:K^d} \in \{-\e,\e\}^{K^d}$ we have that the valuations $V^{\e_{1:K^d}}_1,W^{\e_{1:K^d}}_1,\dots,V^{\e_{1:K^d}}_T,W^{\e_{1:K^d}}_T$ are consistent with our model, in the sense that they are zero-mean noisy perturbations of the market values $\mu_t$ defined by paramerizing every $t\in[T]$ by $ t = j+(i-1)n $, with $(i,j)\in[K^d]\times[n]$, and letting $\mu_t \ceq \mu_{\e_i}$, and these market values and contexts satisfy \Cref{i:two} in our model definition.

We will show that for each algorithm for contextual brokerage with full feedback and each time horizon $T$, if $R_T^{\e_{1:K^d}}$ is the regret of the algorithm at time horizon $T$ when the traders' valuations are $V^{\e_{1:K^d}}_1,W^{\e_{1:K^d}}_1,\dots,V^{\e_{1:K^d}}_T,W^{\e_{1:K^d}}_T$, then $\max_{\e_{1:K^d} \in \{-\e,\e\}^{K^d}} R_T^{\e_{1:K^d}} = \Omega\brb{T^{\frac{d}{d+2}} }$ with our choices of $\e$ and $K$.

To do it, we first denote, for any $\e_1,\dots,\e_{K^d}\in\{-\e,\e\}$, $p\in[0,1]$, and $t \in [T]$, $\GFT^{\e_{1:K^d}}_{t}(p) \coloneqq \gft(p,V^{\e_{1:K^d}}_{t}, W^{\e_{1:K^d}}_{t})$.

Then, by \Cref{l:structural}, we have, for all $\e_1,\dots,\e_{K^d} \in \{-\e,\e\},i \in [K^d],j\in[n]$, and $p \in [0,1]$,
\begin{multline*}
	\E\bsb{\GFT^{\e_{1:K^d}}_{j+(i-1)n}(p) }
	=
	2\int_0^p\int_0^\lambda f_{\e_i}(s)\dif s \dif \lambda 
	+ 2 (\mu_{\e_i} - p)\int_0^p f_{\e_i}(s) \dif s \;,
\end{multline*}
which,  together with the fundamental theorem of calculus ---\cite[Theorem 14.16]{bass2013real}, noting that $p\mapsto\E\bsb{ \GFT^{\e_{1:K^d}}_{j+(i-1)n}(p) }$ is absolutely continuous with derivative defined a.e.\ by $p\mapsto 2(\mu_{\e_i}-p) f_{\e_i}(p)$--- yields, for any $p\in[2/7,1]$,
\begin{equation}
	\E\bsb{ \GFT^{\e_{1:K^d}}_{j+(i-1)n}(\mu_{\e_i}) } - \E\bsb{ \GFT^{\e_{1:K^d}}_{j+(i-1)n}(p) }
	=
	|\mu_{\e_i} - p|^2 \;.
	\label{e:proof-lb-full}
\end{equation}

Hence note that for all $\e_{1:K^d} \in \{-\e,\e\}^{K^d}$, $i \in [K^d]$, $j \in [n]$, and $p<\frac12$, if $\e_i > 0$, a direct verification shows that
\begin{equation}
	\E\lsb{ \GFT^{\e_{1:K^d}}_{j+(i-1)n} \lrb{\fracc12} }
	\ge
	\E\bsb{ \GFT^{\e_{1:K^d}}_{j+(i-1)n} (p) }
	+ \Omega(\e^2)\;.
	\label{e:lower-bound-1-full}
\end{equation}
Similarly, for all $\e_{1:K^d} \in \{-\e,\e\}^{K^d}$, $i \in [K^d]$, $j \in [n]$, and $p>\frac12$, if $\e_i < 0$, then
\begin{equation}
	\E\lsb{ \GFT^{\e_{1:K^d}}_{j+(i-1)n} \lrb{\fracc12} }
	\ge
	\E\bsb{ \GFT^{\e_{1:K^d}}_{j+(i-1)n} (p) }
	+ \Omega(\e^2)
	\;.
	\label{e:lower-bound-2-full}
\end{equation}

In words, in the $\e_i = \e$ (resp., $\e_i = -\e$) case, the optimal price for the rounds $1+(i-1)n,\dots, in$ belongs to the region $\bigl(\frac{1}{2},1\bigr]$ (resp., $\bigl[0, \frac12\bigr)$).
Then, with a standard information-theoretic argument, since $\Omega(1/\e^2) = \Omega(n)$ samples are needed to determine the sign of $\e_i$ (with high probability), any algorithm will pay a regret of at least $\Omega\brb{ \frac{1}{\e^2} \cdot \e^2} = \Omega(1)$ for each point in the lattice, for a total regret of $\Omega( 1 \cdot K^d) = \Omega\brb{ T^{d/(d+2)} }$.

\section{PROOF OF THEOREM \ref{t:upper-bound-limited-main}}
\label{section:simplified:twobitfeed:oned}

Note first that the decisions to either explore or exploit (\Cref{state:if:exploit,if:explore:and:bisect}), the decisions to whether or not bisect (\Cref{state:bisect}) during a run of \exbis{}, and $\cC_t, i_t, m_t, \cE_t, n_t, \cC'_t, q_t, \cE'_t, n'_t$ (for all $t\in[T]$) are deterministic, since they are only determined by the adversarial sequence of contexts $\bx_1,\ldots,\bx_T$.

Hence, for any time step $t\in[T]$ where the property $n_t \ge n'_t$ on \Cref{state:if:first:exploit} holds (which, again, is a deterministic event), we have that $n_t \ge n'_t \ge 2^{2 (i_t-1)}$, since the parent cell $\cC'_t$ of $\cC_t$ was bisected on \Cref{state:bisect}, with \Cref{if:explore:and:bisect} holding.
Proceeding as in \Cref{section:simplified:fullfeed}, this implies that, for any time step $t\in[T]$ where the property $n_t \ge n'_t$ on \Cref{state:if:first:exploit} holds, the instantaneous regret of \exbis{} satisfies
\begin{multline*}
	\sup_{p_t\in[0,1]} \E \bsb{ \gft ( p_t, V_t, W_t ) } - \E \bsb{ \gft ( P_t, V_t, W_t ) }
	\le 
	M 2^{-2 i_t}
	+
	\frac{M}{2} 2^{-2 (i_t-1)}
	= 
	3 M \cdot 2^{-2 i_t}.
\end{multline*}

Proceeding once more as in \Cref{section:simplified:fullfeed}, we obtain that for all time steps $t\in[T]$ such that property $n_t < n'_t$ on \Cref{state:if:second:exploit} holds, the instantaneous regret of \exbis{} satisfies
\begin{multline*}
	\sup_{p_t\in[0,1]} \E \bsb{ \gft ( p_t, V_t, W_t ) } - \E \bsb{ \gft ( P_t, V_t, W_t ) }
	\le 
	M 2^{-2 (i_t-1)}
	+
	\frac{M}{2} 2^{-2 (i_t-1)}
	= 6M \cdot 2^{-2 i_t}.
\end{multline*}
Therefore, the instantaneous regret of \exbis{} in \emph{all} exploiting rounds $t\in[T]$ such that $m_t < 2^{4i_t}$ on \Cref{state:if:exploit} satisfies
\begin{equation}
	\label{e:upper-bound-exploit}
	\sup_{p_t\in[0,1]} \E \bsb{ \gft ( p_t, V_t, W_t ) } - \E \bsb{ \gft ( P_t, V_t, W_t ) }
	\le
	6M \cdot 2^{-2 i_t}.
\end{equation}
Now, for any cell $\cC$ of level $i_\cC$ obtained by successive bisections of $[0,1)^d$, let $\cF_\cC$ be the set of all time steps $s \in [T]$ for which $\cC_t=\cC$ on \Cref{state:set:Ct:defined},
$n_{\cC} \ceq \labs{\cF_\cC}$, 
%
%
and note that the sum of the instantaneous regrets of \exbis{} over time steps $t\in\cF_\cC$ such that $m_t \ge 2^{4i_t}$ on \Cref{if:explore:and:bisect} (i.e., total regret due to \emph{exploration} in rounds $t$ where $\cC_t = \cC$) 
is less than or equal to the sum of the upper bounds in \Cref{e:upper-bound-exploit} of the instantaneous regrets of \exbis{} over time steps $t\in\cF_\cC$ such that $m_t < 2^{4i_t}$ on \Cref{state:if:exploit} (i.e., total regret due to \emph{exploitation} in rounds $t$ where $\cC_t = \cC$), since each exploration time $t\in\cF_\cC$ is preceded by $2^{4 i_\cC}$ exploitation times (by \Cref{state:Mt,state:if:exploit}), yielding a total regret due to exploitation of $6M \cdot 2^{4 i_\cC} \cdot 2^{-2 i_\cC} = 6M \cdot 2^{2 i_\cC}$, whereas the maximal number of exploration times is $2^{2 i_\cC}$ (by \Cref{if:explore:and:bisect,state:bisect}). 

Hence, it is sufficient to control the sum over all exploitation rounds of the upper bounds in \Cref{e:upper-bound-exploit} to obtain an upper bound on the regret of \exbis{}, up to a factor of $2$. 
With this in mind, and with the same notation $(\cC_{i,j})_{i\in\{0,1,\dots\}, j\in[2^{di}]}$ introduced in \Cref{section:simplified:fullfeed} for the family of all cells that can be obtained by successive bisections of $[0,1)^d$,
let $\widetilde{R}_T$ be the upper bound on the total regret due to exploitation defined as the sum over all exploitation rounds (i.e., over all rounds $t\in[T]$ such that $m_t < 2^{4i_t}$ on \Cref{state:if:exploit})  of the upper bounds in \Cref{e:upper-bound-exploit}. 
We have
\begin{equation}
	\label{e:upper-bound-total-exploit}
	\widetilde{R}_T
	\le 
	6M
	\sum_{i=0}^{\infty}
	\sum_{j=1}^{2^{id}}
	n_{\cC_{i,j}}
	2^{-2i},
\end{equation}
with the constraints that (letting $\N_0 \ceq \{0,1,2,\dots\}$):
\begin{equation}
	\label{e:constraints-exploit}
	n_{\cC_{i,j}} \le 2^{4 i},
	\ \forall i \in \N_0, \forall j \in [2^{id}],
	\quad\text{and}\quad
	\sum_{i=0}^{\infty}
	\sum_{j=1}^{2^{id}} n_{\cC_{i,j}}
	\le
	T.
\end{equation}
Let $k$ be the smallest integer such that 
\[
\sum_{i=1}^k 
2^{id}
2^{4i} \ge T.
\]
Then
\[
\frac{1 - 2^{(k+1)(d+4)}}{1 - 2^{d+4}}
\ge T
\]
or, equivalently,
\[
2^{(k+1)(d+4)}
\ge 
1 + (2^{d+4}-1)T.
\]
By definition of $k$, we also have
\[
\sum_{i=1}^{k-1} 
2^{id}
2^{4i} < T.
\]
Then
\[
\frac{1 - 2^{k(d+4)}}{1 - 2^{d+4}}
< T
\]
or, equivalently,
\[
2^{k(d+4)}
< 
1 + (2^{d+4}-1)T.
\]
By the constraints in \Cref{e:constraints-exploit} and the definition of $k$, \Cref{e:upper-bound-total-exploit} implies that
\begin{align*}
	\frac{
		\widetilde{R}_T
	}{6M}
	\le &
	\sum_{i=0}^k
	2^{id} 2^{4i} 2^{-2i}
	\\
	= &
	\frac{1 - 2^{(k+1)(d+2)}}{1-2^{d+2}}
	\\ 
	\le &
	\frac{1}{2^{d+2}-1}
	2^{(k+1)(d+2)}
	\\ 
	= &
	\frac{1}{2^{d+2}-1}
	2^{k(d+4)}
	2^{d+2-2k} 
	\\ 
	\le &
	\frac{1+(2^{d+4}-1)T}{2^{d+2}-1} 
	2^{d+2-2k}.
\end{align*}
Now, recalling that
\[
2^{(k+1)(d+4)}
\ge 
1 + (2^{d+4}-1)T,
\]
we obtain
\[
2^{k(d+4)}
\ge 
\frac{ (2^{d+4}-1)
}{
	2^{d+4}
}
T
\]
and consequently
\[
2^{k}
\ge 
\left(
\frac{ 2^{d+4}-1
}{
	2^{d+4}
}
\right)^{\frac{1}{d+4}}
T^{\frac{1}{d+4}}.
\]

Hence 
\begin{align*} 
	\frac{
		\widetilde{R}_T
	}{6M}
	&
	\le
	\frac{
		2^{d+2} \left(1+(2^{d+4}-1) \right)}{2^{d+2}-1} 
	T
	2^{-2k}
	\\
	&
	\le 
	\frac{
		\frac{
			2^{d+2} \left(1+(2^{d+4}-1) \right)}{2^{d+2}-1}
	}{
		\left(
		\frac{ 2^{d+4}-1
		}{
			2^{d+4}
		}
		\right)^{\frac{2}{d+4}}
	}
	\cdot
	T^{1-\frac{2}{d+4}}.
	\\
	& =
	\frac{
		\frac{
			2^{d+2} \left(1+(2^{d+4}-1) \right)}{2^{d+2}-1}
	}{
		\left(
		\frac{ 2^{d+4}-1
		}{
			2^{d+4}
		}
		\right)^{\frac{2}{d+4}}
	}
	\cdot
	T^{\frac{d+2}{d+4}}
	\\
	&
	=
	\frac{\frac{2^{d+2}2^{d+4}}{2^{d+2}-1}}{\left(1-\frac{1}{2^{d+4}}\right)^{\frac{2}{d+4}}}
	\cdot
	T^{\frac{d+2}{d+4}}
	\\
	&
	=
	2^{4}\cdot2^{d}\cdot\frac{\frac{2^{d+2}}{2^{d+2}-1}}{\left(1-\frac{1}{2^{d+4}}\right)^{\frac{2}{d+4}}}
	\cdot
	T^{\frac{d+2}{d+4}}
	\\
	&
	\le
	2^{4}\cdot2^{d}\cdot\frac{\frac{2^{1+2}}{2^{1+2}-1}}{\left(1-\frac{1}{2^{1+4}}\right)^{\frac{2}{1+4}}}
	\cdot
	T^{\frac{d+2}{d+4}}
	\\
	&
	=
	2^{4}\cdot2^{d}\cdot\frac{32}{7\cdot31^{2/5}}
	\cdot
	T^{\frac{d+2}{d+4}}
\end{align*}
where, in the last inequality, we used the fact that the function $d\mapsto \fracc{\frac{2^{d+2}}{2^{d+2}-1}}{\left(1-\frac{1}{2^{d+4}}\right)^{\frac{2}{d+4}}}$ is decreasing on $d\ge 1$.
This immediately implies that
\[
R_T
\le
2 \widetilde{R}_T
\le
223 \cdot M \cdot 2^d  \cdot T^{\frac{d+2}{d+4}}
=
O \lrb{ T^{\frac{d+2}{d+4}} }.
\]

\section{PROOF OF THEOREM \ref{t:lower-bound-limited-main}}
\label{s:proof-lower-bound-limited}

The proof shares a similar construction as the proof of \Cref{t:lower-bound-full-main}, with a different tuning and additional arguments at the end to account for the limited feedback. 
Nevertheless, we give the full details here for ease of exposition.

Fix $T\in \N$. 
Assume without loss of generality that $K \ceq T^{\frac{1}{d+4}}$ is an integer, and note that $K^d$ divides $T$.
Let $n \ceq \frac{T}{K^d} = T^{4/(d+4)} = K^4 \in \N$ and $\e \ceq n^{-1/4} = T^{-1/(d+4)}$.
Let
$
f_{\pm \e} 
\coloneqq
1
\mp \e \I_{\lsb{\frac17,\frac3{14}}} 
\pm \e \I_{ \left( \frac3{14},\frac27 \right] }
$.
Note that $0 \le f_{\pm\e} \le 2$ and $\int_0^1 f_{\pm\e}(x) \dif x = 1$, hence $f_{\pm\e}$ is a valid density on $[0,1]$ bounded by $M=2$.
We will denote the corresponding probability measure by $\cD_{\pm\e}$ and define $\mu_{\pm\e} \coloneqq \int_{[0,1]} x \dif \cD_{\pm\e}(x) = \frac12 \pm \frac{\e}{196}$.
Consider for each $q \in [0,1]$, an i.i.d.\ sequence $(B_{q,t})_{t \in \N}$ of Bernoulli random variables of parameter $q$, an i.i.d.\ sequence $(\tilde{B}_t)_{t \in \N}$ of Bernoulli random variables of parameter $1/7$, an i.i.d.\ sequence $(U_t)_{t \in \N}$ of uniform random variables on $[0,1]$, such that $\lrb{(B_{q,t})_{t \in \N, q \in [0,1]} , (\tilde{B}_t)_{t \in \N}, (U_t)_{t \in \N}}$ is an independent family. 
Let $\varphi \colon [0,1] \to [0,1]$ be such that, if $U$ is a uniform random variable on $[0,1]$, then the distribution of $\varphi(U)$ has density $\frac{7}{6} \cdot \I_{[0,1]\m [\fracc{1}{7},\fracc{2}{7}]}$ (which exists by the Skorokhod representation theorem, \citealt[Section 17.3]{williams1991probability}).
For each $t \in \N$, define
\begin{multline}
	G_{\pm\e,t}
	\coloneqq
	\biggl( \frac{2+U_t}{14}(1-B_{\frac{1\pm\e}{2},t}) 
	+ \frac{3+U_t}{14}B_{\frac{1\pm\e}{2},t} \biggr) \tilde{B}_t + \varphi(U_t)(1-\tilde{B}_t) \;,
	\label{e:representation_of_V}
\end{multline}
$V_{\pm\e,t} \coloneqq G_{\pm\e,2t-1}$, $W_{\pm\e,t} \coloneqq G_{\pm\e,2t}$, $\xi_{\pm\e,t} \coloneqq V_{\pm\e,t} - \mu_{\pm\e}$, and $\zeta_{\pm\e,t} \coloneqq W_{\pm\e,t} - \mu_{\pm\e}$.

In the following, if $a_1,\dots,a_{K^d}$ is a sequence of elements, we will use the notation $a_{1:K^d}$ as a shorthand for $(a_1,\dots,a_{K^d})$. 
For each $\e_1,\dots,\e_{K^d} \in \{-\e,\e\}$, each $i \in [K^d]$, and each $j \in [n]$, define the random variables $\xi^{\e_{1:K^d}}_{j+(i-1)n} \coloneqq \xi_{\e_i,j+(i-1)n}$ and $\zeta^{\e_{1:K^d}}_{j+(i-1)n} \coloneqq \zeta_{\e_i,j+(i-1)n}$.
One can check that
the family $\brb{\xi^{\e_{1:K^d}}_{t},\zeta^{\e_{1:K^d}}_{t}}_{t\in [T], \e_{1:K^d}\in\{- \e , \e\}^{K^d}}$ is an independent family, and for each $i \in [K^d]$ and each $j \in [n]$ the two random variables $\xi^{\e_{1:K^d}}_{j+(i-1)n},\zeta^{\e_{1:K^d}}_{j+(i-1)n}$ are zero mean with common distribution given by a shift by $\mu_{\e_i}$ of $\cD_{\e_i}$.
For each $\e_1,\dots,\e_{K^d} \in \{-\e,\e\}$, for each $i \in [K^d]$ and $j \in [n]$, let $V^{\e_{1:{K^d}}}_{j+(i-1)n} \coloneqq \mu_{\e_i} + \xi^{\e_{1:K^d}}_{j+(i-1)n}$ and $W^{\e_{1:K^d}}_{j+(i-1)n} \coloneqq \mu_{\e_i} + \zeta^{\e_{1:K^d}}_{j+(i-1)n}$.
Note that these last two random variables are $[0,1]$-valued zero-mean perturbations of $\mu_{\e_i}$ with shared density given by $f_{\e_i}$, and hence bounded by $2$.

Crucially, we assume that the learner knows what will be the sequence of the contexts in advance, and hence we can restrict the proof to deterministic algorithms without any loss of generality.
Specifically, 
define, for all $(i_1, \dots, i_d)\in[K]^d$ and $j\in[n]$, the lattice points
\[
\bx_{i_1,\ldots,i_d, \, j}
\ceq
\left(  
\frac{i_1-1}{K}
,
\ldots 
, 
\frac{i_d-1}{K}
\right).
\]
Then, define the contexts $(\bx_t)_{t\in[T]}$ as the contexts corresponding to the lexicographic (increasing) ordering of the indices of $\brb{\bx_{i_1,\ldots,i_d, j}}_{(i_1,\ldots,i_d,j)\in[K]^d\times[n]}$, where vectors of indices $(i_1,\ldots,i_d,j)$ are thought of as digits composing a numerical string. 
In words, the first $n$ contexts $\bx_1, \dots, \bx_n$ are all equal to $\bx_{1,\ldots,1, 1} = \dots = \bx_{1,\ldots,1, n} = (0,\ldots,0)$, the next $n$ contexts $\bx_{n+1}, \dots, \bx_{2n}$ are $\bx_{1,\ldots,1,2, 1} = \bx_{1,\ldots,1,2, n} = (0,\ldots,1/K)$, and so on, until the last $n$ contexts $\bx_{T-n+1} = \dots = \bx_T$ that are all equal to $\brb{ \frac{K-1}{K},\ldots,\frac{K-1}{K} }$. 
%

Now, notice that since $\e = 1/K$ then for any $\e_{1:K^d} \in \{-\e,\e\}^{K^d}$ we have that the valuations $V^{\e_{1:K^d}}_1,W^{\e_{1:K^d}}_1,\dots,V^{\e_{1:K^d}}_T,W^{\e_{1:K^d}}_T$ are consistent with our model, in the sense that they are zero-mean noisy perturbations of the market values $\mu_t$ defined by paramerizing every $t\in[T]$ by $ t = j+(i-1)n $, with $(i,j)\in[K^d]\times[n]$, and letting $\mu_t \ceq \mu_{\e_i}$, and these market values and contexts satisfy \Cref{i:two} in our model definition.

We will show that for each algorithm for contextual brokerage with limited feedback and each time horizon $T$, if $R_T^{\e_{1:K^d}}$ is the regret of the algorithm at time horizon $T$ when the traders' valuations are $V^{\e_{1:K^d}}_1,W^{\e_{1:K^d}}_1,\dots,V^{\e_{1:K^d}}_T,W^{\e_{1:K^d}}_T$, then $\max_{\e_{1:K^d} \in \{-\e,\e\}^{K^d}} R_T^{\e_{1:K^d}} = \Omega\brb{T^{\frac{d+2}{d+4}} }$ with our choices of $\e$ and $K$.

To do it, we first denote, for any $\e_1,\dots,\e_{K^d}\in\{-\e,\e\}$, $p\in[0,1]$, and $t \in [T]$, $\GFT^{\e_{1:K^d}}_{t}(p) \coloneqq \gft(p,V^{\e_{1:K^d}}_{t}, W^{\e_{1:K^d}}_{t})$.

Then, by \Cref{l:structural}, we have, for all $\e_1,\dots,\e_{K^d} \in \{-\e,\e\},i \in [K^d],j\in[n]$, and $p \in [0,1]$,
\begin{multline*}
	\E\bsb{\GFT^{\e_{1:K^d}}_{j+(i-1)n}(p) }
	=
	2\int_0^p\int_0^\lambda f_{\e_i}(s)\dif s \dif \lambda 
	+ 2 (\mu_{\e_i} - p)\int_0^p f_{\e_i}(s) \dif s \;,
\end{multline*}
which,  together with the fundamental theorem of calculus ---\cite[Theorem 14.16]{bass2013real}, noting that $p\mapsto\E\bsb{ \GFT^{\e_{1:K^d}}_{j+(i-1)n}(p) }$ is absolutely continuous with derivative defined a.e.\ by $p\mapsto 2(\mu_{\e_i}-p) f_{\e_i}(p)$--- yields, for any $p\in[2/7,1]$,
\begin{equation}
	\E\bsb{ \GFT^{\e_{1:K^d}}_{j+(i-1)n}(\mu_{\e_i}) } - \E\bsb{ \GFT^{\e_{1:K^d}}_{j+(i-1)n}(p) }
	=
	|\mu_{\e_i} - p|^2 \;.
	\label{e:proof-lb}
\end{equation}

Hence note that for all $\e_{1:K^d} \in \{-\e,\e\}^{K^d}$, $i \in [K^d]$, $j \in [n]$, and $p<\frac12$, if $\e_i > 0$, a direct verification shows that
\begin{equation}
	\E\lsb{ \GFT^{\e_{1:K^d}}_{j+(i-1)n} \lrb{\fracc12} }
	\ge
	\E\bsb{ \GFT^{\e_{1:K^d}}_{j+(i-1)n} (p) }
	+ \Omega(\e^2)\;.
	\label{e:lower-bound-1}
\end{equation}
Similarly, for all $\e_{1:K^d} \in \{-\e,\e\}^{K^d}$, $i \in [K^d]$, $j \in [n]$, and $p>\frac12$, if $\e_i < 0$, then
\begin{equation}
	\E\lsb{ \GFT^{\e_{1:K^d}}_{j+(i-1)n} \lrb{\fracc12} }
	\ge
	\E\bsb{ \GFT^{\e_{1:K^d}}_{j+(i-1)n} (p) }
	+ \Omega(\e^2)
	\;.
	\label{e:lower-bound-2}
\end{equation}
Furthermore, a direct verification shows that, for each $\e_{1:K^d} \in \{-\e,\e\}^{K^d}$ and $t \in [T]$, 
\begin{multline}
	\max_{p \in [0,1]} \E\bsb{ \GFT^{\e_{1:K^d}}_{t} (p)} - \max_{p \in [\frac{1}{7},\frac{2}{7}]} \E\bsb{ \GFT^{\e_{1:K^d}}_{t} (p)} 
	\ge \frac{1}{50} 
	= \Omega(1) \;.
	\label{e:lower-bound-3}
\end{multline}
%
In words, in the $\e_i = \e$ (resp., $\e_i = -\e$) case, the optimal price for the rounds $1+(i-1)n,\dots, in$ belongs to the region $\bigl(\frac{1}{2},1\bigr]$ (resp., $\bigl[0, \frac12\bigr)$).
By posting prices in the wrong region $\bsb{0, \frac{1}{2}}$ (resp., $[\frac{1}{2}, 1)$) in the $\e_i = \e$ (resp., $\e_i = -\e$) case, the learner incurs a $\Omega(\e^2) = \Omega\brb{1/K^2}$ instantaneous regret by  \eqref{e:lower-bound-1} (resp.,  \eqref{e:lower-bound-2}).
Then, in order to attempt suffering less than $\Omega\brb{1/K^2 \cdot n} = \Omega(1/\e^2)$ cumulative regret in the rounds $1+(i-1)n,\dots, in$, the algorithm would have to detect the sign of $\e_i$ and play accordingly.
We will show now that even this strategy will not improve the regret of the algorithm (by more than a constant) because of the cost of determining the sign of $\e_i$ with the available feedback. 
Since for any $i \in [K^d]$ and $j \in [n]$, the feedback received from the two traders at time $j+(i-1)n$ by posting a price $p$ is $\I\bcb{p \le V^{\e_{1:K^d}}_{j+(i-1)n}}$ and $\I\bcb{ p \le W^{\e_{1:K^d}}_{j+(i-1)n} }$, the only way to obtain information about (the sign of) $\e_i$ is to post in the costly ($\Omega(1)$-instantaneous regret by \cref{e:lower-bound-3}) sub-optimal region $[\frac{1}{7},\frac{2}{7}]$
.
However, posting prices in the region $[\frac{1}{7},\frac{2}{7}]$ at time $j + (i-1)n$ can't give more information about (the sign of) $\e_i$ than the information carried by $V^{\e_{1:K^d}}_{j+(i-1)n}$ and $W^{\e_{1:K^d}}_{j+(i-1)n}$, which, in turn, can't give more information about (the sign of) $\e_i$ than the information carried by the two Bernoullis $B_{\frac{1+\e_i}{2},2(j+(i-1)n)-1}$ and $B_{\frac{1+\e_i}{2},2(j+(i-1)n)}$. 
Also, notice that only during rounds $1+(i-1)n,\dots,in$ it is possible to extract information about the sign of $\e_i$.
With a standard information-theoretic argument, in order to distinguish the sign of $\e_i$ having access to i.i.d.\ Bernoulli random variables of parameter $\frac{1+\e_i}{2}$ requires $\Omega(1/\e^2) $ samples. Hence we are forced to post at least $\Omega( 1/\e^2 )$ prices in the costly region $\bsb{\frac{1}{7},\frac{2}{7}}$ during the rounds $1+(i-1)n,\dots,in$ suffering a regret of $\Omega( 1/\e^2 ) \cdot \Omega(1) = \Omega( 1/\e^2 )$.
Putting everything together, no matter what the strategy, each algorithm will pay at least $\Omega\brb{ 1/\e^2}$ regret in each epoch $1+(i-1)n,\dots,in$ for every $i \in [K^d]$, resulting in an overall regret of $K^d \cdot \Omega( 1/\e^2) = \Omega\brb{ T^{\frac{d+2}{d+4}}}$.

\section{MISSING DETAILS FROM SECTION \ref{s:one-half-approx}}
\label{s:approximation-appe}

In the proof of our approximation theorem, we leverage the following result from \cite[Lemma 1]{bachoc2024contextual}.
\begin{lemma}
	\label{l:structural}
	Suppose that $V$ and $W$ are two $[0,1]$-valued independent random variables with possibly different densities but both bounded by the same constant $M\ge 1$, and such that $\E[V] = \E[W] \eqqcolon \mu$. 
	Denote by $F$ (resp., $G$) the cumulative distribution function of $V$ (resp., $W$).
	Then, for each $p \in[0,1]$, it holds that
	\[
	\E\bsb{\gft(p,V,W)}
	=
	\int_0^p (F+G)(\lambda) \dif \lambda + (\mu-p)(F+G)(p)
	\]
	and 
	\[
	0 \le \E\bsb{\gft(\mu,V,W) - \gft(p,V,W) } \le M \labs{\mu-p}^2 \;.
	\]
\end{lemma}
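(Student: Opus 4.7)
The plan is to establish the integral identity first, and then derive both halves of the inequality from it by elementary calculus.

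\textbf{Identity.} Since $V\vee W - V\wedge W = |V-W|$ and $\{V=W\}$ has probability zero (both variables admit densities),
\[
\gft(p,V,W) = (W-V)\I\{V\le p\le W\} + (V-W)\I\{W\le p\le V\}
\]
almost surely. I would take expectations and use independence of $V$ and $W$ to factor each cross term, e.g.,
\[
\E\bsb{(W-V)\I\{V\le p\le W\}} = F(p)\,\E\bsb{W\I\{W\ge p\}} - (1-G(p))\,\E\bsb{V\I\{V\le p\}},
\]
and analogously for the other summand. Summing the four resulting pieces and replacing $\E\bsb{V\I\{V\ge p\}}$ by $\mu - \E\bsb{V\I\{V\le p\}}$ (and likewise for $W$), the $\mu$ contributions combine into $(F(p)+G(p))\mu$, while the truncated first moments telescope down to $-\E\bsb{V\I\{V\le p\}} - \E\bsb{W\I\{W\le p\}}$. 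A single integration by parts then converts each truncated moment according to $\E\bsb{V\I\{V\le p\}} = pF(p) - \int_0^p F(\lambda)\dif\lambda$ (and symmetrically for $W$), and collecting the terms yields the stated identity.

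\textbf{Quadratic bound.} Let $\phi(p) \ceq \E\bsb{\gft(p,V,W)}$. The identity just proved shows $\phi$ is absolutely continuous in $p$, so by the fundamental theorem of calculus for absolutely continuous functions (as the paper invokes in \Cref{section:simplified:fullfeed}) its almost-everywhere derivative is $\phi'(p) = (\mu - p)(f(p) + g(p))$, where $f,g$ are the densities of $V,W$; here the two $(F+G)(p)$ contributions, coming from differentiating $\int_0^p (F+G)$ and from the product rule on $(\mu-p)(F+G)(p)$, cancel. Since $f+g \ge 0$, $\phi'$ has the same sign as $\mu-p$, so $\phi$ attains its maximum at $p=\mu$, which gives the lower inequality. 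For the upper inequality, writing
\[
\phi(\mu) - \phi(p) = \int_{p\wedge\mu}^{p\vee\mu} |\mu - s|\,(f(s)+g(s))\dif s
\]
and using the hypothesis $f,g \le M$ bounds the integrand from above by $2M|\mu-s|$, so the integral is at most $M|\mu - p|^2$.

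\textbf{Main obstacle.} The argument is essentially bookkeeping with integration by parts followed by a one-line Taylor-type estimate; the only subtlety is the absolute-continuity step required to differentiate $\phi$ under the mere bounded-density hypothesis (rather than continuity of $f,g$). This is already handled by the same fundamental theorem of calculus for absolutely continuous functions that the paper invokes in \Cref{section:simplified:fullfeed}, so no new analytic machinery is needed.
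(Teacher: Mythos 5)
The paper does not prove \Cref{l:structural} at all: it is imported verbatim as Lemma~1 of \cite{bachoc2024contextual}, so there is no in-paper argument to compare yours against. On its own merits your proof is correct and complete. The identity step is sound: the pointwise decomposition $\gft(p,V,W)=(W-V)\I\{V\le p\le W\}+(V-W)\I\{W\le p\le V\}$ holds everywhere (not merely a.s., since each indicator already forces the required ordering, so the density hypothesis is not actually needed there), independence factors the cross moments as you write, replacing $\E\bsb{V\I\{V\ge p\}}$ by $\mu-\E\bsb{V\I\{V\le p\}}$ uses only that $\Pb(V=p)=0$, and the integration by parts $\E\bsb{V\I\{V\le p\}}=pF(p)-\int_0^p F(\lambda)\dif\lambda$ collects the terms into the claimed form. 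For the inequality, your appeal to absolute continuity is legitimate and in fact overly cautious: since $F+G$ is Lipschitz with constant at most $2M$ and bounded, the map $p\mapsto\int_0^p(F+G)+(\mu-p)(F+G)(p)$ is itself Lipschitz, hence absolutely continuous, so the a.e.\ derivative $(\mu-p)\brb{f(p)+g(p)}$ and the fundamental theorem of calculus apply; monotonicity of the sign of $\phi'$ gives the lower bound, and integrating $\babs{\mu-s}\brb{f(s)+g(s)}\le 2M\labs{\mu-s}$ over $[p\wedge\mu,p\vee\mu]$ gives exactly $M\labs{\mu-p}^2$. No gap.
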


We now prove the optimality of our $\frac{1}{2}$-approximation result.


\begin{proof}[Proof of Theorem \ref{t:approximation-tight}]
	
	Fix $\delta \in (0,1/6)$ and the probability density functions:
	\begin{align*}
		f_\delta \colon [0,1] & \to \bbR , 
		\\
		x  & \mapsto f_\delta (x) \ceq
		\frac{1}{2\delta} \I\{ 0 \le x \le \delta \} + \frac{1}{2\delta} \I\{ 1-\delta \le x \le 1 \} \;
	\end{align*}
	and
	\begin{align*}
		g_\delta  \colon [0,1] & \to \bbR , 
		\\
		x & \mapsto g_\delta (x) \ceq
		\frac{1}{2\delta} \I\lcb{ \frac{1}{2}-\delta \le x \le \frac{1}{2}+\delta } \;.
	\end{align*}
	Let $V_\delta$ and $W_\delta$ be two independent random variables with probability density functions $f_\delta$ and $g_\delta$, respectively.
	Then:
	{\allowdisplaybreaks
		\begin{align*}
			\allowdisplaybreaks
			&   \E \bsb{ \labs{V_\delta-W_\delta} } \\
			= &
			\int_{[0,1]^2} \labs{ v-w } f_\delta(v) g_\delta(w) \dif v \dif w
			\\
			= &
			\frac{1}{4\delta^2} \int_{[0,\delta]\cup[1-\delta,1]} \lrb{ \int_{\lsb{\frac12-\delta,\frac12+\delta}}\labs{ v-w } \dif w }  \dif v 
			\\
			= &
			\frac{1}{4\delta^2} \int_0^\delta \lrb{ \int_{\frac12-\delta}^{\frac12+\delta} ( w-v ) \dif w }  \dif v +
			\frac{1}{4\delta^2} \int_{1-\delta}^1 \lrb{ \int_{\frac12-\delta}^{\frac12+\delta} ( v-w ) \dif w }  \dif v
			\\
			= &
			\frac{1}{4\delta^2} \int_0^\delta \lsb{ \frac{( w-v )^2}{2}} _{w = \frac12-\delta}^{\frac12+\delta}  \dif v
		       +
			\frac{1}{4\delta^2} \int_{1-\delta}^1 \lsb{ -\frac{( v-w )^2}{2}  }_{w = \frac12-\delta}^{\frac12+\delta}  \dif v
			\\
			= &
			\frac{1}{4\delta^2} \int_0^\delta \lrb{ \frac{( \frac12+\delta-v )^2}{2} - \frac{( \frac12-\delta-v )^2}{2}  }  \dif v
		      +
			\frac{1}{4\delta^2} \int_{1-\delta}^1 \lrb{ -\frac{ \brb{ v-(\frac12+\delta) }^2}{2} + \frac{\brb{ v-(\frac12-\delta) }^2}{2} }  \dif v
			\\
			= &
			\frac{1}{4\delta^2} \int_0^\delta (1-2v)\delta \dif v
			+
			\frac{1}{4\delta^2} \int_{1-\delta}^1 (2v-1)\delta \dif v
			\\ 
			= &
			\frac{1}{4\delta} \int_0^\delta (1-2v) \dif v
			+
			\frac{1}{4\delta} \int_{1-\delta}^1 (2v-1) \dif v
			\\ 
			= &
			\frac{1-\delta}{2}
		\end{align*}%
	}%
	Instead, the reward accrued when posting the expectation $\mu \ceq \E[V_\delta] = \E[W_\delta] = \frac12$ is:
	{\allowdisplaybreaks
		\begin{align*}
			    \E\bsb{\gft(\mu,V_\delta,W_\delta)}   
			= &
			\int_{[0,1]^2} \lrb{v \vee w - v \wedge w} \I \{v \wedge w \le \mu \le v \vee w\} f_\delta(v) g_\delta(w) \dif v \dif w
			\\
			= &
			\frac{1}{4\delta^2} \int_{[\frac{1}{2} -\delta, \frac{1}{2} +\delta]} 
		 \lrb{\int_{[0,\delta]\cup[1-\delta,1]} ( v \vee w - v \wedge w ) \I \{v \wedge w \le \mu \le v \vee w\} \dif v } \dif w
			\\
			= &
			\frac{1}{4\delta^2} \int_{[\frac{1}{2} -\delta, \frac{1}{2} +\delta]} \Bigg(
			\int_{[0,\delta]} ( w - v ) \I \{v \le \mu \le w\} \dif v 
			+
			\int_{[1-\delta,1]} ( v - w ) \I \{ w \le \mu \le v \} \dif v 
			\Bigg) \dif w
			\\
			= &
			\frac{1}{4\delta^2} \int_{[\frac{1}{2} -\delta, \frac{1}{2} +\delta]} \Bigg(
			\int_{[0,\delta]} ( w - v ) \I \{\mu \le w\} \dif v 
		 +
			\int_{[1-\delta,1]} ( v - w ) \I \{ w \le \mu \} \dif v 
			\Bigg) \dif w
			\\
			= &
			\frac{1}{4\delta^2} \int_{[\frac{1}{2}, \frac{1}{2} +\delta]} \lrb{
				\int_{[0,\delta]} ( w - v )  \dif v 
			} \dif w
		        +
			\frac{1}{4\delta^2} \int_{[\frac{1}{2} -\delta, \frac{1}{2} ]} \lrb{
				\int_{[1-\delta,1]} ( v - w )  \dif v 
			} \dif w
			\\
			= &
			\frac{1}{4\delta^2} \int_{[\frac{1}{2}, \frac{1}{2} +\delta]} \lrb{
				\frac{w^2}{2}
				-\frac{(w - \delta)^2}{2}
			} \dif w
			        +
			\frac{1}{4\delta^2} \int_{[\frac{1}{2} -\delta, \frac{1}{2} ]} \lrb{
				\frac{(1-w)^2}{2}
				-
				\frac{(1 - \delta - w)^2}{2}
			} \dif w
			\\
			= & \frac{1}{4} \;.
		\end{align*}%
	}%
	We will now show that, for all $\e \in (0,1/10)$, there exist two independent $[0,1]$-valued random variables $V$ and $W$ with expectation $1/2$ and admitting bounded densities such that     
	$
	\E\bsb{\gft(\mu,V,W)}
	=
	\lrb{\frac{1}{2}+\varepsilon} \cdot \E\bsb{|W-V|}
	$
	(which immediately implies that, for all $\e >0$, there exist two independent $[0,1]$-valued random variables $V$ and $W$ with bounded densities and common expectation such that $
	\E\bsb{\gft(\mu,V,W)}
	\le
	\lrb{\frac{1}{2}+\varepsilon} \cdot \E\bsb{|W-V|}
	$).
	
	Indeed, for all $\e \in (0,1/10)$, setting $\delta \coloneqq \frac{2\e}{1+2\e}$ and noting that $\delta \in (0,1/6)$, we get
	\begin{align*}
		\E\bsb{\gft(\mu,V_\delta,W_\delta)}
		&    =
		\frac{1}{4}
		=
		\lrb{\frac{1}{2}+\e}\frac{1-\delta}{2}
		\\
		&=
		\lrb{\frac{1}{2}+\e}\E \bsb{ \labs{V_\delta-W_\delta} }
		\;. \qedhere
	\end{align*}
\end{proof}

\subsubsection*{Acknowledgements}
The work of FB was supported by the Project GAP (ANR-21-CE40-0007) of the French National Research Agency (ANR) and by the Chair UQPhysAI of the Toulouse ANITI AI Cluster.
RC is partially supported by the MUR PRIN grant 2022EKNE5K (Learning in Markets and Society), the FAIR (Future Artificial Intelligence Research) project, funded by the NextGenerationEU program within the PNRR-PE-AI scheme, the EU Horizon CL4-2022-HUMAN-02 research and innovation action under grant agreement 101120237, project ELIAS (European Lighthouse of AI for Sustainability).
TC gratefully acknowledges the support of the University of Ottawa through grant GR002837 (Start-Up Funds) and that of the Natural Sciences and Engineering Research Council of Canada (NSERC) through grants RGPIN-2023-03688 (Discovery Grants Program) and DGECR2023-00208 (Discovery Grants Program, DGECR - Discovery Launch Supplement). All authors are grateful to Trent DeGiovanni for his constructive feedback.

\end{document}